\newcommand{\eps}{\epsilon}
\newcommand{\Prb}{\mathbb{P}}
\newcommand{\state}{\mathbf{s}}
\newcommand{\sz}{{\state_0}}
\newcommand{\s}[1]{{\state_{#1}}}
\newcommand{\st}{{\state_t}}
\newcommand{\sT}{{\state_T}}
\newcommand{\action}{\mathbf{a}}
\newcommand{\at}{{\action_t}}
\newcommand{\traj}[2]{\tau^{#1}_{#2}}
\newcommand{\policy}{\pi}
\newcommand{\policybase}{\pi_{\mathrm{base}}}
\newcommand{\policytask}{\pi_{\mathrm{task}}}
\newcommand{\policyproj}{\pi_{\mathrm{proj}}}
\newif\ifarXiv
\title{
SPoRt - Safe Policy Ratio: \\ Certified Training and Deployment of Task Policies in Model-Free RL
}
\author{}
 \author{
 Jacques Cloete$^1$
 \and
 Nikolaus Vertovec$^2$\And
 Alessandro Abate$^2$\\
 \affiliations
 $^1$Oxford Robotics Institute, University of Oxford\\
 $^2$Department of Computer Science, University of Oxford\\
 \emails
 jacques@robots.ox.ac.uk,
 \{nikolaus.vertovec, alessandro.abate\}@cs.ox.ac.uk
 }
\begin{document}

\maketitle

\begin{abstract}
To apply reinforcement learning to safety-critical applications, we ought to provide safety guarantees during both policy training and deployment. In this work, we present theoretical results that place a bound on the probability of violating a safety property for a new task-specific policy in a model-free, episodic setting. This bound, based on a maximum policy ratio computed with respect to a `safe' base policy, can also be applied to temporally-extended properties (beyond safety) and to robust control problems. To utilize these results, we introduce SPoRt, which provides a data-driven method for computing this bound for the base policy using the scenario approach, and includes Projected PPO, a new projection-based approach for training the task-specific policy while maintaining a user-specified bound on property violation. SPoRt thus enables users to trade off safety guarantees against task-specific performance. Complementing our theoretical results, we present experimental results demonstrating this trade-off and comparing the theoretical bound to posterior bounds derived from empirical violation rates.
\end{abstract}

\section{Introduction}

Reinforcement Learning (RL) is an area of machine learning where an agent is trained to interact with its environment to maximize some (cumulative) reward~\cite{Sutton_Barto_2014,Mason2019}. There has been great interest in applying RL to real-world control problems in fields such as robotics~\cite{Kober2014,Hwangbo2019,Singh_Kumar_Singh_2022}, traffic management~\cite{Chu2020,Vertovec2023,Lee2023} and autonomous driving~\cite{Isele2018,Ma2021,Li2022}, to name just a few. Many of these domains typically fall into the realm of ``safety-critical'' applications, whereby we need to guarantee safety specifications, such as obstacle avoidance. Satisfying safety constraints becomes particularly challenging when we have little to no knowledge of our environment. This problem has been studied in a substantial body of literature, known as model-free safe RL.

Traditional policy gradient algorithms for model-free RL, such as Trust Region Policy Optimization (TRPO)~\cite{Schulman2015} and Proximal Policy Optimization (PPO)~\cite{PPO}, allow the agent to explore any behavior during training, including behaviors that would be considered unsafe; this is unacceptable for safety-critical applications. To encode safety into training, a popular formulation is the Constrained Markov Decision Process (CMDP) \cite{Altman2021}, which includes safety constraints and is typically solved using primal-dual methods ~\cite{Achiam2017} and modifying the trust region to exclude unsafe policy updates ~\cite{Milosevic2024}.
However, the CMDP formulation is limited in its ability to model safety constraints; CMDPs constrain the expected discounted cost return, but for many practical applications we require an explicit bound on the probability that a sampled trajectory violates a safety constraint.

Alternative approaches based on control theory ensure safety by preventing the agent from taking actions that would eventually lead to safety violations; this is achieved using, e.g., Lyapunov and barrier functions \cite{Chow2018}, shielding \cite{Alshiekh2018,Konighofer2023} or safety filters \cite{Hsu2024}. However, these approaches require a model of the environment to predict future safety, and thus are generally limited to model-based setups.
Meanwhile, formal methods-based approaches, such as~\cite{LCRL}, encode safety by leveraging Linear Temporal Logic (LTL)~\cite{Pnueli_1977} as a formal reward-shaping structure. Unlike CMDPs, whereby the original objective is separate from the constraint, LTL formula satisfaction is encoded into the expected return itself, and under certain conditions the trained policy is guaranteed to maximize the probability of LTL formula satisfaction; however, no guarantees can be obtained \emph{during} training.

Since we presume no knowledge of the environment, as in standard model-free RL, we will rely on finite-sample learning to evaluate the agent's ability to remain safe using probably approximately correct (PAC) guarantees. Finite-sample complexity bounds provide the number of samples needed to, with a given confidence, learn some target function with a certain accuracy~\cite{Vidyasagar2003}. Tools from statistical learning theory based on Vapnik Chervonenkis (VC) theory have successfully been able to provide finite sample bounds for learning in unknown environments~\cite{Vidyasagar2003,Tempo2005}, with recent work providing finite sample bounds even under changing target assumptions~\cite{Vertovec2024}. Yet VC-theoretic techniques require the computation of the VC dimension, which is a difficult task for generic optimization problems. Under a convexity assumption, the so-called scenario approach offers a-priori probabilistic feasibility guarantees without resorting to VC theory~\cite{Calafiore2006,Campi2008,Campi2018}. 

The scenario approach traditionally relies on independent and identically distributed (i.i.d.) samples to establish its sample-complexity bounds. This creates a limitation in RL contexts, where the sampling distribution changes as policies are updated. As a result, safety guarantees established for one policy cannot be directly transferred when the policy changes. In this work, we overcome this limitation by extending the PAC guarantees to accommodate policy changes. Specifically, we derive a constraint on how much policies can shift while maintaining safety guarantees, and present SPoRt, an approach for adapting an existing safe policy to improve task-specific performance while maintaining a bound on the probability of safety violation, known prior to deploying or even training the adapted policy; this bound can be tuned by the user to trade off safety and task-specific performance.

Our technical contributions underpinning SPoRt are as follows:
\begin{enumerate}
    \item A data-driven method for obtaining a bound on the probability that a property (e.g. safety), in general expressed as an LTL formula, is violated for trajectories drawn using a given `safe' base policy (Section \ref{data_driven_validation_property_satisfaction}).
    \item Novel theoretical results that provide, for an episodic, model-free RL setup, a prior bound on the probability of property violation for a new task-specific policy, based on a `maximum policy ratio’ computed with respect to the  `safe’ base policy (cf. previous point) (Section \ref{probability_property_violation}).
    \item A projection-based method for constraining the task-specific policy to ensure that this prior bound holds (Section \ref{ensuring_constraint_satisfaction_modified_policy}).
    \item Projected PPO, an algorithm for \emph{training} a new, task-specific policy, while maintaining a user-specified prior bound on property violation, thus trading off safety guarantees for task-specific performance (Section \ref{training_while_maintaining_bound_property_violation}).
\end{enumerate}
We also test SPoRt on a time-bounded reach-avoid property and present experimental results demonstrating the safety-performance trade-off, as well as comparison of the theoretical prior bound to posterior bounds based on empirical violation rates  (Section \ref{case_studies} and \ref{results_discussion}).

All appendices and code\footnote{Link to code: \url{https://github.com/JacquesCloete/sport}.} can be found in the supplementary material, which contains all proofs.

\section{Models, Tasks and Properties}
\label{models}

We consider a model-free episodic RL setup where an agent interacts with an unknown environment modeled as a Markov Decision Process (MDP) \cite{Sutton_Barto_2014}, specified by the tuple $\langle\mathcal{S}, \mathcal{A}, p, \mu, r_\mathrm{task}\rangle$, with a continuous state space $\mathcal{S}$ and continuous action space $\mathcal{A}$. $p(s'|a,s): \mathcal{S} \times \mathcal{A} \to \Delta(\mathcal{S})$ and $\mu(s) \in \Delta(\mathcal{S})$ are the (unknown) state-transition and initial state distributions, respectively. We will consider learning a stochastic policy $\policy(a|s): \mathcal{S} \to \Delta(\mathcal{A})$ in a model-free setup.
We use $\traj{p,\policy}{\st,T} = (\st, \s{t+1}, \ldots, \s{t+T})^{p,\policy}$ to denote a realization of a trajectory of the closed-loop system with state transition distribution $p$, starting at state $\st$ and evolving for $T$ time steps, using policy $\policy$. $r_\mathrm{task}(s,a): \mathcal{S} \times \mathcal{A} \to \mathbb{R}$ is the task-specific reward, which encourages higher task-specific performance (for example, max speed or min time). 

\subsection{Safety as a Temporally-Extended Property}

We define safety in terms of satisfaction of a general temporal property $\varphi$. We denote that a trajectory $\traj{}{}$ satisfies property $\varphi$ (and is therefore safe) by $\traj{}{} \models \varphi$, while $\traj{}{} \not\models \varphi$ indicates that $\traj{}{}$ violates $\varphi$ (and is therefore unsafe). SPoRt addresses problems where the objective is to ensure that $\traj{p,\policy}{\sz \sim \mu,T} \models \varphi$ with high probability, while maximizing the task-specific reward $r_\mathrm{task}$. To evaluate the satisfaction of $\varphi$ we introduce a robustness metric $\varrho^{\varphi}$, which encodes property violation as a real-valued signal that is non-negative only when $\tau \models \varphi$.
\begin{definition}
    \label{def:robustness}
    A \textit{robustness metric} $\varrho^{\varphi}$ is a function $\varrho^{\varphi}(\tau): \mathcal{S}^{n} \to [-a,b],~n \in \mathbb{Z}_{+},~a,b \in \mathbb{R}_{+}$ such that $\varrho^{\varphi}(\tau) \geq 0$ only for trajectories $\tau \in \mathcal{S}^{n}$ that satisfy property $\varphi$ (i.e. $\traj{}{} \models \varphi$).
\end{definition}
Any safety property $\varphi$ can be expressed as a Linear Temporal Logic (LTL) formula \cite{Pnueli_1977}, which ensures the existence of such a metric (see Appendix A.1). Notably, SPoRt extends beyond safety properties to encompass any property $\varphi$ expressible as an LTL formula - the case study deals with `reach-avoid' as we shall see. Accordingly, our theoretical results generalize to 
product MDPs in RL problems under general LTL specifications \cite{LCRL}. While Appendix A.2 provides detailed discussions on these extensions to general LTL formulae and hybrid-state models, for the remainder of the paper (and with no loss in generality) we focus exclusively on safety properties $\varphi$ within continuous-state MDPs, as defined in Section \ref{models}. 

\section{Data-Driven Property Satisfaction}
\label{data_driven_validation_property_satisfaction}




SPoRt provides a method for adapting an existing safe policy ($\policybase$) so as to maximize some task-specific reward ($r_\mathrm{task}$), without violating a given property $\varphi$. 

As a first step, let us evaluate the property satisfaction of given traces for a general policy $\policy$. Given an initial state distribution, state transition distribution and stochastic policy ($\mu,p,\policy$), the value of the robustness metric for an associated trajectory, i.e., $\varrho^{\varphi}(\traj{p,\policy}{\sz \sim \mu,T})$ will be a random variable drawn from some distribution $\Delta_{\mu}^{p,\policy}$ and the probability of satisfying the property $\varphi$ will be encoded by
\begin{equation}
    \Prb\{\varrho^{\varphi}(\traj{p,\policy}{\sz \sim \mu,T}) \in \Delta_{\mu}^{p,\policy}: \varrho^{\varphi}(\traj{p,\policy}{\sz \sim \mu,T}) \geq 0\}.
\end{equation}
SPoRt first bounds the probability of property violation under an existing `safe' base policy $\policybase$, i.e, $\Prb\{\varrho^{\varphi}(\traj{p,\policybase}{\sz \sim \mu,T}) \in \Delta_{\mu}^{p,\policybase}: \varrho^{\varphi}(\traj{p,\policybase}{\sz \sim \mu,T}) < 0\} \leq \eps_{\mathrm{base}}$
using the scenario approach \cite{Campi2018}; we roll out $N$ scenario trajectories $(\traj{p,\policybase}{\sz \sim \mu,T})_{i}$ using $\policybase$ and record them in buffer $\mathcal{D}_{\mu}^{p,\policybase} = \{(\traj{p,\policybase}{\sz \sim \mu,T})_{i}\}_{i=1}^{N}$. We use Theorem \ref{thm:scenario} to obtain an upper bound  $\eps_{\mathrm{base}}$ on the probability of violating $\varphi$:
\begin{theorem}
\label{thm:scenario}
    If $\varrho^{\varphi}((\traj{p,\policy}{\sz \sim \mu,T})_{i}) \geq 0$ for all $N$ scenarios $(\traj{p,\policy}{\sz \sim \mu,T})_{i}$ in $\mathcal{D}_{\mu}^{p,\policy}$ = $\{(\traj{p,\policy}{\sz \sim \mu,T})_{i}\}_{i=1}^{N}$, then with confidence $1-\beta$, where $\beta = \left(1 - \eps\right)^{N}$, the probability of drawing a new scenario $\traj{p,\policy}{\sz \sim \mu,T}$ such that $\varrho^{\varphi}(\traj{p,\policy}{\sz \sim \mu,T}) < 0$ is at most $\eps$.
\end{theorem}
If not all scenarios in $\mathcal{D}_{\mu}^{p,\policybase}$ satisfy $\varphi$, we can leverage results from \cite{Campi_2010} to identify a suitable $\eps_{\mathrm{base}}$ `under k-constraint removal', as follows:  
\begin{corollary}
\label{cor:constraint_removal}
    Assume $k$ scenarios $(\traj{p,\policy}{\sz \sim \mu,T})_{i}$ in buffer $\mathcal{D}_{\mu}^{p,\policy}$ = $\{(\traj{p,\policy}{\sz \sim \mu,T})_{i}\}_{i=1}^{N}$ are such that $\varrho^{\varphi}((\traj{p,\policy}{\sz \sim \mu,T})_{i}) < 0$, then with confidence $ 1 - \beta$, where $\beta = \sum_{i=0}^{k} \binom{N}{i} \epsilon_k^i (1-\epsilon_k)^{N-i}$, the probability of drawing a new scenario $\traj{p,\policy}{\sz \sim \mu,T}$ such that $\varrho^{\varphi}(\traj{p,\policy}{\sz \sim \mu,T}) < 0$ is at most $\epsilon_k$.
\end{corollary}
In both cases, we first collect $N$ scenarios, then choose our confidence $ 1 - \beta$, and then compute the bound $\eps_{\mathrm{base}}$.

\section{
Property Violation under Modified MDPs}
\label{probability_property_violation}


Once $\eps_{\mathrm{base}}$ is obtained, SPoRt safely trains a task-specific policy $\policytask$ so as to maximize the (cumulative) reward $r_{\mathrm{task}}$. For SPoRt to ensure safe training of $\policytask$, we must upper bound the probability of property violation under $\policytask$, i.e., $\Prb\{\varrho^{\varphi}(\traj{p,\policytask}{\sz \sim \mu,T}) \in \Delta_{\mu}^{p,\policytask}: \varrho^{\varphi}(\traj{p,\policytask}{\sz \sim \mu,T}) < 0\}$, by the probability of property violation under $\policybase$, i.e., $\Prb\{\varrho^{\varphi}(\traj{p,\policybase}{\sz \sim \mu,T}) \in \Delta_{\mu}^{p,\policybase}: \varrho^{\varphi}(\traj{p,\policybase}{\sz \sim \mu,T}) < 0\}$, which is upper bounded by $\eps_{\mathrm{base}}$. To do so, we first construct this bound for general $(\mu_{1},p_{1},\policy_{1})$ and $(\mu_{2},p_{2},\policy_{2})$, and then set $(\mu_{1},p_{1},\policy_{1}) = (\mu,p,\policybase)$ and $(\mu_{2},p_{2},\policy_{2}) = (\mu,p,\policytask)$.


Let $\mathcal{S}_{0}, \dots, \mathcal{S}_{T} \subseteq \mathcal{S}$ be a sequence of arbitrary subsets of the state space for each time step in the episode. We begin by characterizing the probability that a sampled trajectory $\traj{p,\policy}{\sz \sim \mu,T} = (\sz \sim \mu, \s{1}, \ldots, \sT)^{p,\policy}$ is such that $\sz \in \mathcal{S}_{0}, \dots, \sT \in \mathcal{S}_{T}$ as a forward recursion, based on work in~\cite{additive_difference,additive_difference_2}. Let $\mathds{1}_{\mathcal{S}_{t}}(s):\mathcal{S} \to \{0, 1\}$ be the indicator function for $s \in \mathcal{S}_{t}$, and define functions $W_{t}^{\mu,p,\policy}(s): \mathcal{S} \to \mathbb{R}_{+}$, characterized as
\begin{gather}
    W_{t+1}^{\mu,p,\policy}(s') = \mathds{1}_{\mathcal{S}_{t+1}}(s') \int_{\mathcal{S}} P^{p,\policy}(s'| s) W_{t}^{\mu,p,\policy}(s)ds\\
    \text{and}
    \quad
    W_{0}^{\mu,p,\policy}(s') = \mathds{1}_{\mathcal{S}_{0}}(s')\mu(s'), \\
    \text{where} \quad P^{p,\policy}(s' | s) = \int_{\mathcal{A}} p(s' | a,s) \policy(a | s) da.
\end{gather}
It holds that $\Prb\{\traj{p,\policy}{\sz \sim \mu,T}: \sz \in \mathcal{S}_{0}, \dots, \sT \in \mathcal{S}_{T}\} = \int_{\mathcal{S}} W_{T}^{\mu,p,\policy}(s) ds$. We then use Theorem \ref{thm:bound_traj_prob} to bound the probability that trajectory $\traj{p_{2},\policy_{2}}{\sz \sim \mu_{2},T}$ remains within $\mathcal{S}_{0}, \dots, \mathcal{S}_{T}$ throughout the episode in terms of the probability that trajectory $\traj{p_{1},\policy_{1}}{\sz \sim \mu_{1},T}$ remains within the same $\mathcal{S}_{0}, \dots, \mathcal{S}_{T}$:
\begin{theorem}
\label{thm:bound_traj_prob}
    Suppose that, for a set of coefficients $\alpha_{t} \in \mathbb{R}_{+}$, we could constrain $\mu_{2}$, $p_{2}$ and $\policy_{2}$ so as to enforce the following bounds for all $t = 1, \dots, T$:
    \begin{align}
        & \int_{\mathcal{S}} P^{p_{2},\policy_{2}}(s' | s) W_{t-1}^{\mu_{1},p_{1},\policy_{1}}(s) ds \\
        \leq & \alpha_{t} \int_{\mathcal{S}} P^{p_{1},\policy_{1}}(s' | s) W_{t-1}^{\mu_{1},p_{1},\policy_{1}}(s) ds 
        \label{bound}
        \\
        \text{and} \quad & \mu_{2}(s') \leq \alpha_{0} \mu_{1}(s'),
        \quad \forall s' \in \mathcal{S}. 
    \end{align}
    It thus holds that
    \begin{align}
        & \Prb\{\traj{p_{2},\policy_{2}}{\sz \sim \mu_{2},T}: \sz \in \mathcal{S}_{0}, \dots, \sT \in \mathcal{S}_{T}\} \\
        \leq & \Prb\{\traj{p_{1},\policy_{1}}{\sz \sim \mu_{1},T}: \sz \in \mathcal{S}_{0}, \dots, \sT \in \mathcal{S}_{T}\} \prod_{t=0}^{T}\alpha_{t}.
    \end{align}
\end{theorem}
We want to make this bound as tight as possible, which is done by minimizing $\alpha_{t}$ subject to Equation \eqref{bound} for all $s' \in \mathcal{S}$ and $t = 1, \dots, T$. Solving this problem is non-trivial due to $p_{1}$ and $p_{2}$ being unknown in a model-free setup. To this end, we introduce Theorem \ref{thm:policy_ratio} to obtain a feasible solution by constraining $p_{2}$ and $\policy_{2}$ in terms of $p_{1}$ and $\policy_{1}$:

\begin{theorem}
\label{thm:policy_ratio}
    Suppose the following constraint holds:
    \begin{gather}
        p_{2}(s'|a,s) \policy_{2}(a|s)
        \leq \alpha_{t} p_{1}(s'|a,s) \policy_{1}(a|s) 
        \; \forall a \in \mathcal{A}, s \in \mathcal{S}.
    \end{gather}
    Thus Equation \eqref{bound} holds for all $s' \in \mathcal{S}$.
\end{theorem}

Assuming stationarity, under this constraint the bound is minimized when $\alpha_{t} = \alpha$ for all $t = 1, \dots, T$. Note also that under this constraint, we find that $\alpha \geq 1$.

It is important to observe that, while correct, this bound can be very conservative for applications with large episode length $T$; a discussion on this conservativeness can be found in Appendix C.1. Alternative bounds from literature suffer from similar blowup \cite{bcSA12,additive_difference_2}. 

Using the results from Theorem \ref{thm:bound_traj_prob} and \ref{thm:policy_ratio}, we can now derive Theorem \ref{thm:task_fail_prob} to obtain a bound on the probability of property violation for ($\mu_{2},p_{2},\policy_{2}$) in terms of ($\mu_{1},p_{1},\policy_{1}$):

\begin{theorem}
\label{thm:task_fail_prob}
    Suppose that
    \begin{equation}
        \Prb\{\varrho^{\varphi}(\traj{p_{1},\policy_{1}}{\sz \sim \mu_{1},T}) \in \Delta_{\mu_{1}}^{p_{1},\policy_{1}}: \varrho^{\varphi}(\traj{p_{1},\policy_{1}}{\sz \sim \mu_{1},T}) < 0\} \leq \eps_{1}
    \end{equation}
    and for all $t = 1, \dots, T$,
    \begin{gather}
        p_{2}(s'|a,s) \policy_{2}(a|s)
        \leq \alpha_{t} p_{1}(s'|a,s) \policy_{1}(a|s) \\
        \text{and} \quad \mu_{2}(s') \leq \alpha_{0} \mu_{1}(s'), \quad \forall a \in \mathcal{A}, s \in \mathcal{S}.
    \end{gather}
    It thus holds that
    \begin{equation}
        \Prb\{\varrho^{\varphi}(\traj{p_{2},\policy_{2}}{\sz \sim \mu_{2},T}) \in \Delta_{\mu_{2}}^{p_{2},\policy_{2}}: \varrho^{\varphi}(\traj{p_{2},\policy_{2}}{\sz \sim \mu_{2},T}) < 0\} \leq \eps_{1} \prod_{t=0}^{T}\alpha_{t}.
    \end{equation}
\end{theorem}

The proof for Theorem \ref{thm:task_fail_prob} considers all sequences $\mathcal{S}_{0}, \dots, \mathcal{S}_{T}$ that correspond to property violation events, and sums over their probabilities under ($\mu_{2},p_{2},\policy_{2}$).

Now let $(\mu_{1},p_{1},\policy_{1}) = (\mu,p,\policybase)$ and $(\mu_{2},p_{2},\policy_{2}) = (\mu,p,\policytask)$; we see that constraining the policy ratio $\frac{\policytask(a|s)}{\policybase(a|s)} \leq \alpha$ for all $a \in \mathcal{A}, s \in \mathcal{S}$ is sufficient to ensure that the bound holds. The total multiplicative increase on the upper bound for property violation going from $\policybase$ to $\policytask$ is thus $\alpha^T$, with the bound being $\eps_{\mathrm{task}} = \eps_{\mathrm{base}} \alpha^T$.

This is a significant result, since we can now provide a prior bound on the probability of property violation for \textit{any} $\policytask$, based \textit{entirely} on the probability of property violation for $\policybase$ and the maximum policy ratio between $\policytask$ and $\policybase$ across all states and actions, with no required knowledge of $\mu$, $p$ or the constraints under which property $\varphi$ holds, so long as initial state distribution and state transition distribution remain the same. Furthermore, by adjusting the value of $\alpha$ we can directly trade off safety guarantees for deviation from the base policy, which can be leveraged to achieve a boost in task-specific performance.

However, note the exponential relationship between $T$ and $\eps_{\mathrm{task}}$; this means that, for even small increases of $\alpha$ from $1$, our prior bound will always eventually explode to the point of becoming trivially $1$ if $T$ is made sufficiently large. Thus, if the prior bound is to be used, SPoRt is best suited to control problems with a low maximum episode length $T$. In practice, however, there are ways to overcome or otherwise mitigate this limitation, as we will see later in Section \ref{case_studies}.

Note that Theorem \ref{thm:task_fail_prob} also provides a bound when $\mu_{2}$ and $p_{2}$ differ from $\mu_{1}$ and $p_{1}$. Thus, our theoretical results can also be applied to \emph{robust control} settings for perturbed systems; see Appendix C.2 for further discussion.


\section{Constraint Satisfaction for a Task Policy}
\label{ensuring_constraint_satisfaction_modified_policy}

For Theorem \ref{thm:task_fail_prob} to hold, we must maintain the hard constraint $\frac{\policytask(a|s)}{\policybase(a|s)} \leq \alpha$ for all $a \in \mathcal{A}, s \in \mathcal{S}$.   
Given a $\policytask$, 
we can achieve this by projecting $\policytask$ onto the feasible set of policy distributions $\mathit{\Pi}_{\alpha,\policybase}$ at each time step: 
\begin{gather}
    {\policy}_{\mathrm{proj}}(a |s) = \mathrm{proj}_{\mathit{\Pi}_{\alpha,\policybase}(s)}({\policytask}(a |s)), \\
    \text{where} \; \mathit{\Pi}_{\alpha,\policybase}(s) =  \left\{\policy : \alpha  \geq \frac{\policy(a |s)}{\policybase(a | s)} \; \forall a \in \mathcal{A} \right\}. 
\end{gather}
While $\policytask$ represents the unconstrained (and potentially unsafe) task-specific policy network that we train or are provided, the projection $\policyproj$ is a policy that we can safely roll out, including during training. Note that $\alpha$ defines the level sets of $\mathit{\Pi}_{\alpha,\policybase}$, which is always non-empty for $\alpha \geq 1$ (since $\policybase$ itself is a valid $\policytask$). Let us now look at how to simplify the computation of this projection step -- we will henceforth assume diagonal Gaussian policies:

\begin{assumption}
\label{asmp:gaussian}
    Both $\policybase$ and $\policytask$ are diagonal Gaussian policies:
        $\policy(\mathbf{a}|s) = \mathcal{N}\left(\mathbf{a};\boldsymbol{\mu},\mathbf{\Sigma}\right)$, where $\mathbf{\Sigma} = \mathrm{diag}\left(\boldsymbol{\sigma}^{2}\right)$,
    and where the policy means $\boldsymbol{\mu}(s)$ and standard deviations $\boldsymbol{\sigma}(s)$ are functions of MDP state and evaluated at each time step using (for example) a policy neural network. 
\end{assumption}

At each time step, we can obtain $\policyproj$ using Theorem \ref{thm:policy_projection}:
\begin{theorem}
\label{thm:policy_projection}
    Assuming diagonal Gaussian policies and using KL divergence as the distance metric for projection, the means and standard deviations of projected policy $\policyproj$ can be computed from $\policybase$ and $\policytask$ by solving the following convex optimization problem at each time step:
    \begin{equation}
        \begin{aligned}
            &\min_{\boldsymbol{\mu}_\mathrm{proj},\boldsymbol{\sigma}_\mathrm{proj}}~ & &J(\boldsymbol{\mu}_\mathrm{proj},\boldsymbol{\sigma}_\mathrm{proj}) \\
            &\mathrm{subject~to}~ & & \prod_{i=1}^{n}\left(\frac{\sigma_{\mathrm{base},i}}{\sigma_{\mathrm{proj},i}} e^{\frac{1}{2} \frac{\left(\mu_{\mathrm{proj},i} - \mu_{\mathrm{base},i}\right)^{2}}{\sigma_{\mathrm{base},i}^{2} - \sigma_{\mathrm{proj},i}^{2}} }\right) \leq \alpha,\\
            & ~ & & 0< \sigma_{\mathrm{proj},i} < \sigma_{\mathrm{base},i} \quad \forall i=1,\dots,n
        \end{aligned}
    \end{equation}
    \begin{align}
        & \text{where} \quad J(\boldsymbol{\mu}_\mathrm{proj},\boldsymbol{\sigma}_\mathrm{proj})\\
        = & \sum_{i=1}^{n} \left( - 2\ln\left(\sigma_{\mathrm{proj},i}\right) + \frac{\sigma_{\mathrm{proj},i}^{2}}{\sigma_{\mathrm{task},i}^{2}} + \frac{\left(\mu_{\mathrm{proj},i} - \mu_{\mathrm{task},i}\right)^{2}}{\sigma_{\mathrm{task},i}^{2}} \right).
    \end{align}
\end{theorem}
It is interesting to note that the standard deviations of $\policyproj$ must all be strictly lower than those of $\policybase$, in other words we require $\policyproj$ to be less exploratory than $\policybase$. This is intuitive considering that we aim to maintain safety by remaining `close' to $\policybase$. We also note that making $\policybase$  more exploratory (with a larger standard deviation) generally results in a larger $\mathit{\Pi}_{\alpha, \policybase}$, allowing for greater policy change and thus task-specific performance boost by $\policyproj$; see Appendix C.3 for details.

We implement and solve this problem using CVXPY \cite{CVXPY_1,CVXPY_2}; on a standard desktop PC the compute time remains in the order of milliseconds for even high-dimensional action spaces, making this method feasible for many RL applications. See Appendix D for implementation details.

\section{Training for Tasks, Under a Bound on Property Violation}
\label{training_while_maintaining_bound_property_violation}

Above, we have shown how to obtain $\policyproj$ from $\policybase$ and $\policytask$. We have implicitly assumed that we already have $\policybase$ and a corresponding bound on probability of property violation $\eps_{\mathrm{base}}$. There are many practical applications where we would also have access to $\policytask$: as an example from robotics, we may have trained $\policytask$ in simulation, where safety is non-critical, but now want to safely deploy this policy on the real robot, for which a tried-and-tested $\policybase$ is known. 

However, other applications may require that we train $\policytask$ to maximize cumulative $r_\mathrm{task}$ while maintaining a prior bound on safety during training, for example if a suitable simulator to train good policies for the real environment is not available. In this case, we would first choose an acceptable $\alpha \geq 1$ by rearranging $\eps_{\mathrm{task}} = \eps_{\mathrm{base}} \alpha^{T} \leq \eps_{\mathrm{max}}$, where $\eps_{\mathrm{max}}$ is a maximum acceptable probability of property violation, and we then learn $\policytask$ (initialized as $\policybase$) while only ever deploying $\policyproj$ during training so as to ensure the bound holds.

Note that there is a distinction between what we train, $\policytask$, and what we actually deploy, $\policyproj$, during training. 
To overcome this, SPoRt uses Projected PPO, outlined in Algorithm \ref{alg:projected_ppo}, to train $\policytask$ for tasks with continuous state-action spaces. The algorithm is inspired by clipped PPO \cite{PPO} but clips the surrogate advantage based on the policy ratio between the new $\policytask$ and previous $\policyproj$, rather than the previous $\policytask$; we store the (log-)probabilities of $\policyproj$ at each time step during data collection to avoid needing to recompute $\policyproj$ during gradient updates. The advantage estimates are also computed using samples collected by deploying $\policyproj$ rather than $\policytask$.

\begin{algorithm}[tb]
    \caption{Projected PPO}\label{alg:projected_ppo}
    \textbf{Input}: $\theta_{\mathrm{base}}$, $\alpha$, $T$
    \begin{algorithmic}[1]
        \STATE Obtain initial critic parameters $\phi_{0}$ by warm-starting the critic using PPO (episode length $T$) with $\policybase$
        \STATE Initial task-specific policy parameters $\theta_{0} \leftarrow \theta_{\mathrm{base}}$
        \FOR{$k = 0,1,2,\dots$}
            \STATE{Collect trajectories $\mathcal{D}_k = \{ (\traj{}{T})_{i} \}$ by running projected policy $\policy_{\mathrm{proj},\theta_{k}} = \mathrm{proj}_{\Pi_{\alpha,\policybase}}(\policy_{\mathrm{task},\theta_{k}})$ in the environment. Store  $\policy_{\mathrm{proj},\theta_{k}}(\cdot|\st) \; \forall \st \in \traj{}{T}, \traj{}{T} \in \mathcal{D}_{k}$}
            \STATE Compute rewards-to-go $\hat{R}_{t}$
            \STATE{Compute advantage estimates $\hat{A}_{t}$ using $V_{\phi_{k}}$}
            \STATE{Update task-specific policy:
            \begin{gather*}
                \theta_{k+1} = \mathrm{arg} \max_{\theta} \frac{1}{|\mathcal{D}_{k}|T} \sum_{\traj{}{} \in \mathcal{D}_{k}} \sum_{t=0}^{T} \min \Bigl\{ \\
                \hspace{-1.2em} \frac{\policy_{\mathrm{task}, \theta}(\at|\st)}{\policy_{\mathrm{proj},\theta_{k}}(\at|\st)} A^{\policy_{\mathrm{proj},\theta_{k}}}(\st,\at),\; g\bigl(\xi,A^{\policy_{\mathrm{proj},\theta_{k}}}(\st,\at)\bigr) \Bigr\} \\
                \mathrm{where} \; g\left(\xi,A\right) =
                \begin{cases}
                    (1 + \xi)A & \text{if } A \geq 0 \\
                    (1 - \xi)A & \text{if } A < 0 \\
                \end{cases}
            \end{gather*}}
            \STATE{Fit value function:
            \begin{equation}
                \phi_{k+1} = \mathrm{arg} \min_{\phi} \frac{1}{|\mathcal{D}_{k}|T} \sum_{\traj{}{} \in \mathcal{D}_{k}} \sum_{t=0}^{T} \left( V_\phi(\st) - \hat{R}_{t}\right)^{2}
            \end{equation}}
        \ENDFOR        
    \end{algorithmic}
\end{algorithm}

The clipping sets the gradient to zero beyond the maximum/minimum allowed policy ratio of $\policytask$ to $\policyproj$, preventing $\policytask$ from drifting away beyond clipping ratio $\xi$ of $\policyproj$, in this way, we maintain an acceptable amount of mismatch between $\policytask$ and $\policyproj$ and stop $\policytask$ from drifting away from the feasible set of allowed policy distributions.

We also warm-start the value function network for $r_\mathrm{task}$ before training $\policytask$, since an accurate value function is important for effective fine-tuning. This is done by training the value function using clipped PPO until convergence while keeping the policy network weights fixed as those of $\policybase$.

\section{SPoRt in Action: Case Studies}
\label{case_studies}

We apply SPoRt to a \emph{reach-avoid} property, wherein the agent must reach a goal within a time limit while avoiding collision with a hazard up until the goal is reached. Such an objective is standard within the control and verification literature, and indeed can be used to model many real-world problems. For completeness, we provide the LTL formula and robustness metric for the time-bounded reach-avoid property in Appendix E.1, with a reminder that SPoRt can be used over general LTL specifications. 

We implemented the environment in Safety Gymnasium \cite{Ji2023} using a point agent and with the goal and hazard being green and red circular regions, respectively (cf.  Figure \ref{fig:trajectories} and \ref{fig:episode:constrained_episode}). This setup was chosen since it allows for easy interpretability of results while remaining a reasonable abstraction of a real robotic navigation task using a skid-steering mobile robot with a LiDAR sensor; a description of the MDP observation and action spaces can be found in Appendix E.2. The episode is reset if the agent enters the hazard or goal sets, or if the maximum episode length is exceeded. To mitigate the exponential relationship between maximum episode length $T$ and bound $\eps_{\mathrm{task}}$, we reduced the control frequency ten-fold from the default 
(up to $100$ simulation steps per environment step), in-keeping with the observation in Section \ref{probability_property_violation} that
SPoRt is best suited to control problems with low $T$.

\begin{figure*}[t]
    \centering
    \subfloat[]{\includegraphics[width=0.987\linewidth]{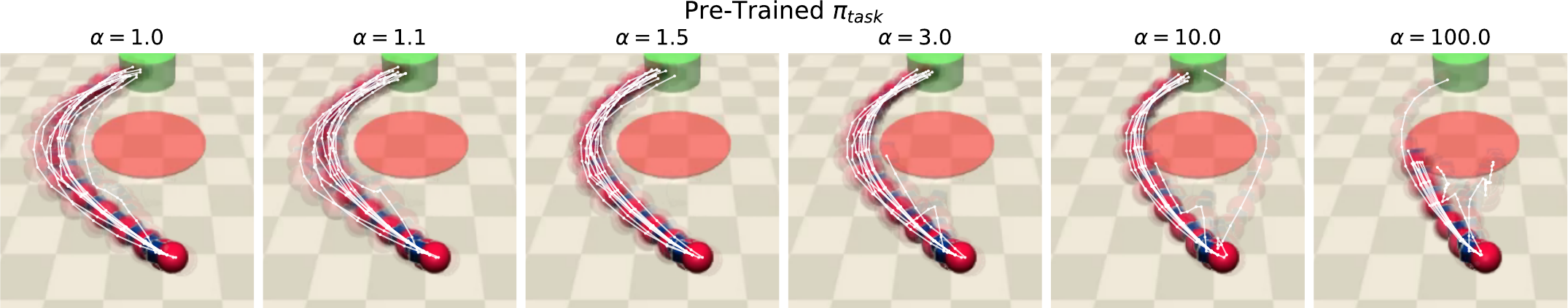}
    \label{fig:trajectories:pretrained_trajectories}}
    \hfil
    \subfloat[]{\includegraphics[width=0.987\linewidth]{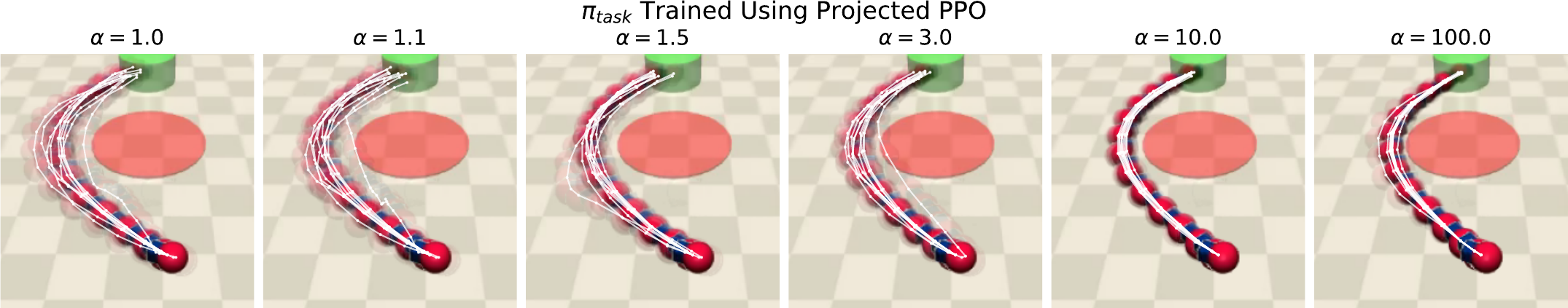}
    \label{fig:trajectories:constrained_trajectories}}
    \caption{Sample distributions of episode trajectories from the reach-avoid experiment using $\policyproj$ for different values of $\alpha$. (\ref{fig:trajectories:pretrained_trajectories}) Case 1 (pre-trained $\policytask$). (\ref{fig:trajectories:constrained_trajectories}) Case 2 ($\policytask$ trained using Projected PPO). Action seeding for each episode was controlled across different values of $\alpha$ and across the different cases, so all results depend on $\alpha$ and the training of $\policytask$.}
    \label{fig:trajectories}
\end{figure*}

\begin{figure*}[t]
    \centering
    \includegraphics[width=0.960\linewidth]{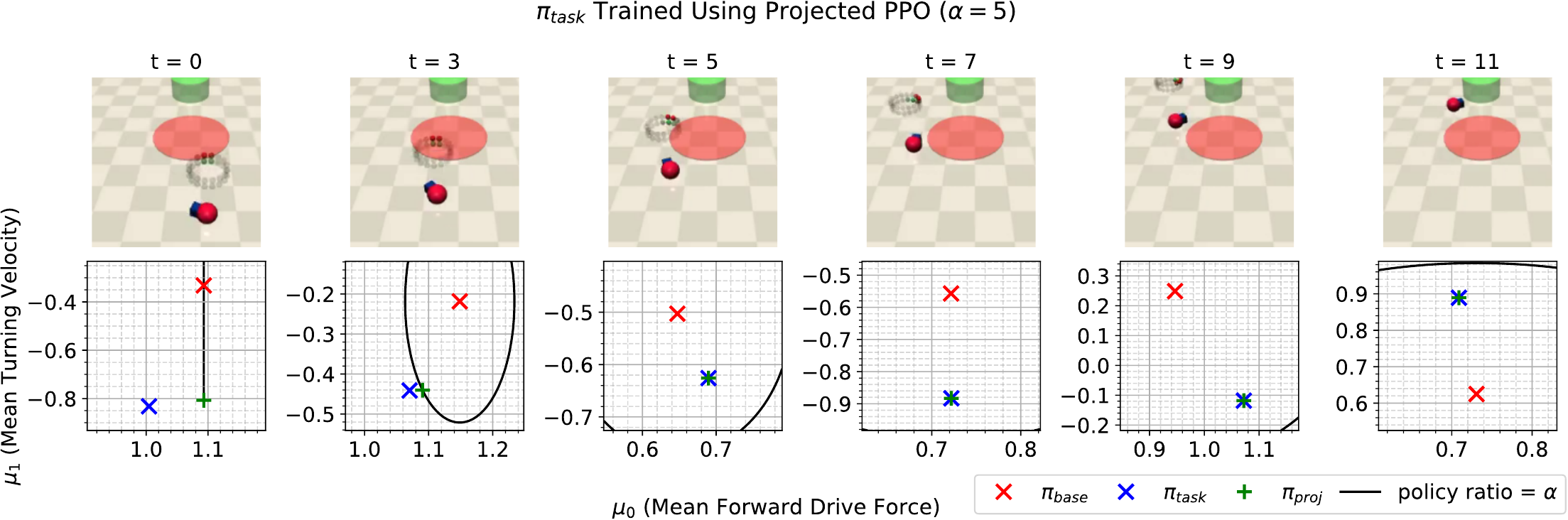}
    \caption{Snapshots across an example episode of the reach-avoid experiment using $\policyproj$ for Case 2 ($\policytask$ trained using Projected PPO) and $\alpha = 5$; the bottom plots present the action means at the corresponding time step, with the black contour depicting the $\alpha = 5$ level set. Note that positive mean turning velocity represents anticlockwise rotation. The halo above the agent is a visualization of its LiDAR observations for the hazard and goal. See Appendix E.5 for Case 1 (pre-trained $\policytask$).}
    \label{fig:episode:constrained_episode}
\end{figure*}

$\policybase$ was trained so as to achieve a high probability of satisfying the property (reach-avoid), while remaining fairly exploratory. This was achieved by training $\policybase$ using Soft Actor-Critic (SAC) \cite{SAC} with a \emph{sparse} reward scheme (corresponding to property satisfaction across an \emph{entire} episode). Alternative synthesis schemes are possible. Further details on training $\policybase$ can be found in Appendix E.3. Once trained, around $N = 10000$ scenarios were collected to determine $\eps_{\mathrm{base}}$ with high confidence ($\beta = \num{1e-7}$, see \cite{Campi2018}) using Corollary \ref{cor:constraint_removal}. Note that while training $\policybase$ the maximum episode length was set to $T = 100$ yet by the end of training the average episode length was much lower, at around $T = 14$. Thus, to keep the value of $\eps_{\mathrm{task}}=\eps_{\mathrm{base}} \alpha^{T}$ as low as possible, the maximum episode length was reduced to $T = 21$ after training $\policybase$ (with $\eps_{\mathrm{base}}$ computed using scenarios of this length). Further discussions (including how SPoRt can be modified to do this automatically) can be found in Appendix E.4.

For our case studies we trained $\policytask$ to reach the goal as quickly as possible: accordingly,  $r_\mathrm{task}$ was the standard dense reward for reaching a goal used by Safety Gymnasium. Notice that the set task (and corresponding reward) clearly leads to a potential violation of the property (reach-avoid) of interest.
We consider two separate cases, as follows: 
\paragraph{Case 1: Pre-Trained Task Policy.} $\policytask$ is trained separately without any consideration for property violation. As a result, under $\policytask$ the agent quickly drives directly towards the goal with no hazard avoidance. This represents applications where $\policytask$ has been pre-trained in an environment where safety is not critical (for example in a robotics simulator) and we want to safely test it on the real environment (see Section \ref{ensuring_constraint_satisfaction_modified_policy}).

\paragraph{Case 2: Task Policy Trained Using Projected PPO.} This represents applications where we have $\policybase$ and $\eps_{\mathrm{base}}$ (as from above) and now want to fine-tune our policy to be faster (thus obtaining $\policytask$), while maintaining an acceptable given bound on property violation (see Section \ref{training_while_maintaining_bound_property_violation}).

Note that we use the same $\policybase$ for both cases.

\section{SPoRt Report: Results and Discussion}
\label{results_discussion}

Both cases were tested for 1000 episodes at different values of $\alpha$ (such that $\policy_{\mathrm{proj}} = \mathrm{proj}_{\Pi_{\alpha,\policybase}}(\policy_{\mathrm{task}})$), ranging from $\alpha = 1$ (i.e. $\policyproj = \policybase$) to the point where the empirical violation rate exceeded a threshold. For Case 2, $\policytask$ was trained until convergence at each value of $\alpha$, prior to testing. Further details on training $\policytask$ for both cases can be found in Appendix E.3. Action seeding for each episode was controlled across different values of $\alpha$ and across the different cases, so all results depend on $\alpha$ and the training of $\policytask$. From our results we seek to answer the following questions (Qs):
\begin{enumerate}[itemsep=0.01cm, topsep=0.1cm]
    \item Does increasing $\alpha$ trade off safety for performance?
    \item How does performance compare between Case 1 and 2?
    \item How conservative is the prior bound $\eps_{\mathrm{task}}=\eps_{\mathrm{base}} \alpha^{T}$?
\end{enumerate}

\begin{figure*}[t]
    \centering
    \subfloat[]{\includegraphics[width=0.474\linewidth]{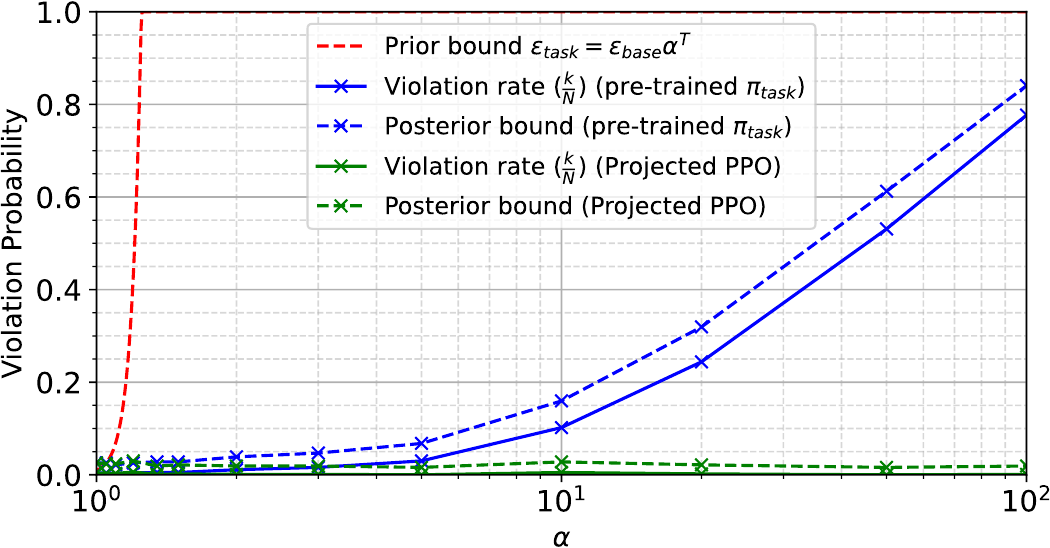}
    \label{fig:plots:failure_probs}}
    \hfil
    \subfloat[]{\includegraphics[width=0.514\linewidth]{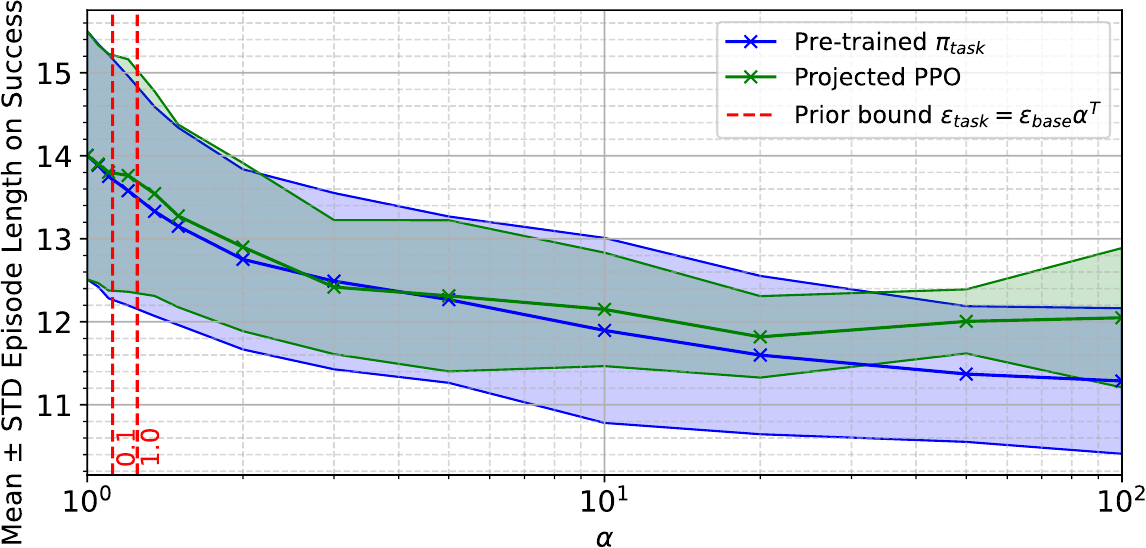}
    \label{fig:plots:mean_std_time_taken}}
    \caption{Results from the reach-avoid experiment for both Case 1 (pre-trained $\policytask$) and 2 ($\policytask$ trained using Projected PPO). (\ref{fig:plots:failure_probs}) Violation probabilities over different values of $\alpha$. (\ref{fig:plots:mean_std_time_taken}) Mean and standard deviation episode length for successful trajectories for different values of $\alpha$. Action seeding for each episode was controlled across different values of $\alpha$ and across the different cases, so all results depend on $\alpha$ and the training of $\policytask$. The same figures but zoomed in to the scale across which $\eps_{\mathrm{task}} \leq 1$ can be found in Appendix E.5.}
    \label{fig:plots}
\end{figure*}

Figure \ref{fig:trajectories} presents sample distributions of episode trajectories for both cases over different values of $\alpha$. In both cases, we see that as $\alpha$ increases, the trajectories bend more tightly around the hazard, suggesting a reduction in action variance,\footnote{Recall we work with diagonal Gaussian policies, hence the consideration of action mean and variance.} as well as an action mean that takes the agent closer to the hazard. In fact, in Case 1 for $\alpha = 100$, the agent's mean trajectory crosses the hazard. Thus increasing $\alpha$ is shown to trade off safety for performance, answering Q1.

However, we can also appreciate the difference between Case 1 and Case 2: while Case 1 produces an action mean that drives the agent through the hazard for $\alpha = 100$, Case 2 instead produces a more reduced action variance, while retaining an action mean that keeps the agent outside the hazard, as expected. Thus, to answer Q2, we see that Case 2 allows us to provide better performance, whilst retaining a `better behaved' (and indeed `safe') $\policyproj$ compared to Case 1; accordingly, we argue that since training $\policytask$ using Projected PPO deploys $\policyproj$ during training, $\policytask$ learns to optimize performance of $\policyproj$ compared to na\"{i}vely training $\policytask$ a priori with no consideration of how $\policyproj$ will perform.

Figure \ref{fig:episode:constrained_episode} presents a more detailed view of the agent behavior over an example episode for Case 2, for $\alpha = 5$ (representing a compromise between safety and performance). Looking at mean turning velocity over the episode, we see that while both $\policybase$ and $\policytask$ drive the agent clockwise around the hazard, $\policytask$ induces sharper turning, taking the agent closer to the hazard and drawing a tighter, shorter curve while maintaining the same or faster forward drive force. However, this sharper turning is constrained such that $\policyproj$ always lies within the $\alpha = 5$ level set (see Section \ref{ensuring_constraint_satisfaction_modified_policy}). Note at $\policytask$ applies a reduced forward drive force at the very start of the episode compared to $\policybase$, which makes sense given the agent is initially pointing away from the goal, so $\policytask$ reduces episode length by first pointing the agent closer to the agent before driving forward. Appendix E.5 provides a similar analysis for Case 1.

Figure \ref{fig:plots:failure_probs} presents violation probabilities over different values of $\alpha$ for both cases. The most striking observation is the conservativeness of prior bound $\eps_{\mathrm{task}}$, 
which grows exponentially from $\eps_{\mathrm{base}} = 0.009$ at $\alpha = 1$ to $\eps_{\mathrm{task}} = 1$ at around $\alpha = 1.25$ (beyond which point the bound is no longer useful), yet the posterior bounds on property violation (obtained by applying Corollary \ref{cor:constraint_removal} to the $N=1000$ test samples) remain at around $0.025$ over this range. We do also see exponential growth in the posterior bound for Case 1, but this happens over a completely different scale ($\alpha = 1$ to $100$ rather than $1$ to $1.25$). The posterior bound for Case 2 remains at $0.025$ for even $\alpha = 100$, suggesting much safer behavior compared to Case 1 for the same $\alpha$.

Figure \ref{fig:plots:mean_std_time_taken} presents the mean and standard deviation episode length for successful trajectories over different values of $\alpha$ for both cases. For both we see a similar reduction in mean and standard deviation as $\alpha$ increases until around $\alpha = 10$, at which point the mean plateaus (to around 12.0 at $\alpha = 100$, for $14.3\%$ total reduction) but standard deviation shrinks for Case 2 while the mean continues to decrease for Case 1 (to around 11.3 at $\alpha = 100$, for $19.3\%$ total reduction)); this comes at the cost of substantially increased violation rate for Case 1, shown by Figure \ref{fig:plots:failure_probs}. Another important observation is that while we know from Figure \ref{fig:plots:failure_probs} that $\eps_{\rm{task}}$ is very conservative, we do see a measurable ($2.1\%$) reduction in mean episode length for both cases from 14.0 at $\alpha = 1$ ($\eps_{\rm{task}} = 0.009$) to around 13.7 at $\alpha = 1.12$ ($\eps_{\rm{task}} = 0.1$, a fairly sensible (if high) value). Appendix E.5 provides the same figures but zoomed in to the scale across which $\eps_{\mathrm{task}} \leq 1$.

These observations further confirm our earlier answers to Q1 and Q2, whilst now we also have an answer for Q3: the prior bound can be very conservative, though it is possible to see measurable improvement in performance while the bound remains fairly sensible.

\section{Limits to Sporting SPoRt}
\label{limitations}

The most obvious limitation of SPoRt is the conservativeness of the prior bound $\eps_{\mathrm{task}}=\eps_{\mathrm{base}} \alpha^{T}$, which prevents significant policy changes if the bound is to be used to guarantee safety. This conservativeness also results in the limitation of needing $T$ to be as low as possible, making SPoRt unsuitable for applications where $T$ is high (though we have seen ways to mitigate this limitation). Another limitations include the reliance on collecting many scenarios to obtain a useful bound $\eps_{\mathrm{base}}$, which may not be practical for some applications, as well as the requirement of stochastic policies (and, ideally, a fairly exploratory $\policybase$ to achieve noticeable policy change). Despite these theoretical limits, we have displayed the usefulness of the end-to-end architecture of SPoRt in meaningful simulation studies, which are promising for upcoming real-world implementations of SPoRt.

\section{Conclusions}
\label{conclusions}

We have presented novel theoretical results that provide a prior bound on the probability of (safety) property violation for a task-specific policy in a model-free, episodic RL setup, based on a new `maximum policy ratio' established vis-a-vis a given `safe' base policy. Based on these bounds, we have presented an end-to-end architecture, SPoRt, which combines a data-driven approach for obtaining such a bound for the base policy with a projection-based approach for training the task-specific policy while maintaining a user-specified prior bound on (safety) property violation, thus trading off safety guarantees and task-specific performance. In view of promising experimental simulation results,  future work will focus on reducing the conservativeness of the prior bound, to improve its utility in practical real-world applications. 

\vfill


\section*{Acknowledgments}
This work was supported by the EPSRC Centre for Doctoral Training in Autonomous Intelligent Machines and Systems [EP/S024050/1].

\bibliographystyle{named}
\bibliography{references}



\end{document}


\maketitle

\appendix
\section*{Appendices}

\section{Extension to LTL and Hybrid-State Models}

\subsection{Ensuring Existence of the Robustness Metric}
\label{appendix:ensuring_existence}

In Section 3 we claim that since any safety property $\varphi$ can be expressed as an LTL formula, we can be sure of the existence of a robustness metric $\varrho^{\varphi}$. To justify this claim, we make use of Signal Temporal Logic (STL) \cite{Maler_Nickovic_2004,Fainekos_Pappas_2009,Madsen_Vaidyanathan_Sadraddini_Vasile_DeLateur_Weiss_Densmore_Belta_2018}. STL extends classical temporal logic by incorporating predicates over real-valued temporal signals $x(t): \mathbb{R} \to \mathbb{R}^{n}$ and, for a given STL formula $\varphi_{\mathrm{STL}}$, returns a real-valued robustness signal $\varrho^{\varphi_{\mathrm{STL}}}(x,t)$, such that $\varrho^{\varphi_{\mathrm{STL}}}(x,t) \geq 0$ when $x(t) \models \varphi_{\mathrm{STL}}$ and $\varrho^{\varphi_{\mathrm{STL}}}(x,t) < 0$  when $x(t) \not\models \varphi_{\mathrm{STL}}$.

To apply STL in the context of RL, we will define $x(t) = x'(s(t))$, where signal mapping $x'(s): \mathcal{S} \to \mathbb{R}^{n}$ maps MDP state $s$ to the real-valued signals upon which STL properties are defined. If our safety property under mapping $x'(s)$ can be expressed as an STL formula (i.e. $\varphi = \varphi_{\mathrm{STL}}$), our robustness metric $\varrho^{\varphi}(\traj{}{})$ is the same as the STL robustness signal $\varrho^{\varphi_{\mathrm{STL}}}(x'(s),t)$ evaluated on finite-length, discrete-time input signal traces (i.e. trajectories) $\traj{}{}$; thus, for properties expressed as an STL formula, we can be sure of the existence of a robustness metric.

To complete the claim, we note that LTL formulae are constructed from a set of atomic propositions $\mathcal{A}\mathcal{P}$, and in the context of RL, each atomic proposition is computed using a function of MDP state known as the labelling function $L(s): \mathcal{S} \to 2^{\mathcal{A}\mathcal{P}}$. By choosing $x'(s)$ such that $L(s) = \mathcal{H}(x'(s))$ where $\mathcal{H}$ is the unit step function, we can build an equivalent STL formula $\varphi_{\mathrm{STL}}$ such that $\traj{}{} \models \varphi \Leftrightarrow \traj{}{} \models \varphi_{\mathrm{STL}}$. In other words, for any LTL formula $\varphi$ that we want our trajectories to satisfy, there exists an equivalent STL formula $\varphi_{\mathrm{STL}}$ that is satisfied by all trajectories that satisfy $\varphi$ (and only those trajectories). Appendix \ref{appendix:LTL_case_studies} provides an example for the time-bounded reach-avoid property.

It is worth noting that, when considering safety only (such as in the main paper), we can assume that $L(s)$ is simply an indicator function for whether safety has been violated at a given MDP state.

\subsection{LTL Specifications and Hybrid-State Models}

SPoRt extends beyond safety properties to encompass any property $\varphi$ expressible as an LTL formula. Moreover, our theoretical results generalize to hybrid-state models with discrete state components, making them applicable to product MDPs in reinforcement learning problems with general LTL specifications \cite{LCRL}. In this section we redefine our system setup to accommodate these generalizations.

We consider a model-free episodic RL setup where an agent interacts with an unknown environment modeled as a Markov Decision Process (MDP) \cite{Sutton_Barto_2014}, specified by the tuple $\langle(\mathcal{S},\mathcal{Q}), \mathcal{A}, p, \mu, r_\mathrm{task}, \mathcal{A}\mathcal{P}, L \rangle$, with a hybrid state space ($\mathcal{S}, \mathcal{Q}$), where $\mathcal{S}$ is the continuous component and $\mathcal{Q}$ is the discrete component, and continuous action space $\mathcal{A}$. $p(s',q'|a,s,q): \mathcal{S} \times \mathcal{Q} \times \mathcal{A} \to \Delta(\mathcal{S} \times \mathcal{Q})$ and $\mu(s,q) \in \Delta(\mathcal{S} \times \mathcal{Q})$ are the (unknown) state-transition and initial state distributions, respectively. $r_\mathrm{task}(s,q,a): \mathcal{S} \times \mathcal{Q} \times \mathcal{A} \to \mathbb{R}$ is the task-specific reward.
$\mathcal{A}\mathcal{P}$ is a finite set of atomic propositions and the labeling function $L(s,q): \mathcal{S} \times \mathcal{Q} \to 2^{\mathcal{A}\mathcal{P}}$ assigns to each state $(s,q) \in (\mathcal{S},\mathcal{Q})$ a set of atomic propositions $L(s,q) \subseteq 2^{\mathcal{A}\mathcal{P}}$~\cite{LCRL}.
We will consider learning a stochastic policy $\policy(a|s,q): \mathcal{S} \times \mathcal{Q} \to \Delta(\mathcal{A})$ in a model-free setup. Let trajectory $\traj{p,\policy}{\st,T} = (\st, \s{t+1}, \ldots, \s{t+T})^{p,\policy}$ denote a realisation of the closed-loop system with state transition distribution $p$, starting at state $\st = (s_{t}, q_{t})$ and evolving for $T$ time steps, using policy $\policy$. 

We define our safety property as a Linear Temporal Logic (LTL) formula $\varphi$ \cite{Pnueli_1977}, using atomic propositions $\mathcal{A}\mathcal{P}$. We use the LTL semantics in \cite{LCRL} to define the satisfaction relation $\traj{}{} \models \varphi$, considering $\traj{}{}$ to be a path of finite length in the MDP. We require $\traj{p,\policy}{\sz,T} \models \varphi$ with high probability, while also attempting to accumulate as much task-specific reward $r_\mathrm{task}$ as possible.

\section{Proofs of Theorems}

Note that all proofs of theorems are written for the general case of hybrid-state models wherein the MDP state space becomes $(\mathcal{S},\mathcal{Q})$; $s \in \mathcal{S}$ is the continuous state component and $q \in \mathcal{Q}$ is the discrete state component. To apply the proofs to the special case of continuous-state MDPs, simply let $|\mathcal{Q}| = 1$.

For the hybrid-state model, we characterize the probability that a sampled trajectory $\traj{p,\policy}{\sz \sim \mu,T} = (\sz \sim \mu, \s{1}, \ldots, \sT)^{p,\policy}$ is such that $\sz \in (\mathcal{S}_{0},\mathcal{Q}_{0}), \dots, \sT \in (\mathcal{S}_{T},\mathcal{Q}_{T})$ as a forward recursion, based on work in~\cite{additive_difference,additive_difference_2}. Let $\mathds{1}_{\mathcal{S}_{t},\mathcal{Q}_{t}}(s,q):\mathcal{S} \times \mathcal{Q} \to \{0, 1\}$ be the indicator function for $s \in \mathcal{S}_{t}, q \in \mathcal{Q}_{t}$, and define functions $W_{t}^{\mu,p,\policy}(s,q): \mathcal{S} \times \mathcal{Q} \to \mathbb{R}_{+}$, characterized as
\begin{gather}
    W_{t+1}^{\mu,p,\policy}(s',q') = \mathds{1}_{\mathcal{S}_{t+1},\mathcal{Q}_{t+1}}(s',q') \sum_{q \in \mathcal{Q}} \int_{\mathcal{S}} P^{p,\policy}(s',q' | s,q) W_{t}^{\mu,p,\policy}(s,q)ds\\
    \text{and}
    \quad
    W_{0}^{\mu,p,\policy}(s',q') = \mathds{1}_{\mathcal{S}_{0},\mathcal{Q}_{0}}(s',q')\mu(s',q'), \quad
    \text{where} \quad P^{p,\policy}(s',q' | s,q) = \int_{\mathcal{A}} p(s',q' | a,s,q) \policy(a | s, q) da.
\end{gather}
It holds that $\Prb\{\traj{p,\policy}{\sz \sim \mu,T}: \sz \in (\mathcal{S}_{0},\mathcal{Q}_{0}), \dots, \sT \in (\mathcal{S}_{T},\mathcal{Q}_{T})\} = \sum_{q \in \mathcal{Q}} \int_{\mathcal{S}} W_{T}^{\mu,p,\policy}(s,q) ds$.

\begin{theorem}
\label{thm:scenario}
    If $\varrho^{\varphi}((\traj{p,\policy}{\sz \sim \mu,T})_{i}) \geq 0$ for all $N$ scenarios $(\traj{p,\policy}{\sz \sim \mu,T})_{i}$ in $\mathcal{D}_{\mu}^{p,\policy}$ = $\{(\traj{p,\policy}{\sz \sim \mu,T})_{i}\}_{i=1}^{N}$, then with confidence $1-\beta$, where $\beta = \left(1 - \eps\right)^{N}$, the probability of drawing a new scenario $\traj{p,\policy}{\sz \sim \mu,T}$ such that $\varrho^{\varphi}(\traj{p,\policy}{\sz \sim \mu,T}) < 0$ is at most $\eps$.
\end{theorem}
\begin{proof}
We will use the theoretical foundations for the scenario approach outlined by \textit{Campi et. al.} \cite{scenario_approach}. Suppose that $\Delta$ represents the set of all possible scenarios, with unknown scenario probability distribution. Given N independent random samples $\delta_{i} \in \Delta$, consider the classical scenario program \ref{eqn:SPN}:
\begin{equation}
    \label{eqn:SPN}
    \tag{$SP_N$}
    \begin{aligned}
        &\min_{r \in \mathcal{R}}~ & &c^T r \\
        &\mathrm{subject~to}~ & & r \in \bigcap_{i=1}^N \mathcal{R}_{\delta_i},
    \end{aligned}
\end{equation}
where $\mathcal{R}, \mathcal{R}_{\delta} \subseteq \R^{d}$ and $d$ is the number of decision variables in the scenario program.

\begin{definition}[violation set and violation probability]
    The violation set of a given $r \in \mathcal{R}_{\delta}$ is the set $\{\delta \in \Delta: r \notin \mathcal{R}_{\delta}\}$. The violation probability (or just violation) of a given $r \in \mathcal{R}_{\delta}$ is the probability of the violation set of $\mathcal{R}_{\delta}$, that is $V(r) \coloneqq \Prb\{\delta \in \Delta: r \notin \mathcal{R}_{\delta}\}$.
\end{definition}

\begin{assumption}[convexity]
\label{asmp:convexity}
    $\mathcal{R}$ and $\mathcal{R}_{\delta}$, $\delta \in \Delta$, are convex closed sets.
\end{assumption}
\begin{assumption}[existence and uniqueness]
\label{asmp:existsunique}
    For every $N$ and for every sample set $\{\delta_1, \ldots, \delta_N\}$, the solution of the program (3.1) exists and is unique.
\end{assumption}

Letting $r^*$ be the solution of \ref{eqn:SPN} we wish to consider the quantification of $V(r^*)$ and present the following standard result:
\begin{lemma}
\label{lem:PAC}
    Let $N \geq d$. Under Assumptions \ref{asmp:convexity} and \ref{asmp:existsunique}, it holds that 
    \begin{equation}
        \Prb^N\{V(r^*) > \epsilon\} \leq \sum_{i=0}^{d-1} \binom{N}{i} \eps^i (1-\eps)^{N-i} \leq \beta.
    \end{equation}
\end{lemma}
Since $d=1$ for our problem, the upper bound in Lemma \ref{lem:PAC} simplifies to
\begin{equation}
    \Prb^N\{V(r^*) > \epsilon\} \leq (1-\eps)^{N} \leq \beta.
\end{equation}

Now, note that the problem of checking $\varrho^{\varphi}((\traj{p,\policy}{\sz \sim \mu,T})_{i}) \geq r^{*}_{\mu,p,\policy}$ for all $N$ scenarios in $\mathcal{D}_{\mu}^{p,\policy}$ = $\{(\traj{p,\policy}{\sz \sim \mu,T})_{i}\}_{i=1}^{N}$ can be written as solving for $r^{*}_{\mu,p,\policy}$ by means of the convex program
\begin{equation}
    \label{eqn:rstar}
    \begin{aligned}
        &\min_{r_{\mu,p,\policy} \in [-a,b]}~ & &-r_{\mu,p,\policy}\\
        &\mathrm{subject~to}~ & & r_{\mu,p,\policy} \in \bigcap_{i=1}^N [-a, \varrho_i].
    \end{aligned}
\end{equation}
where $\varrho_i = \varrho^{\varphi}((\traj{p,\policy}{\sz \sim \mu,T})_{i})$. The set $[-a, \varrho_i]$ is both closed and convex, and $r_{\mu,p,\policy} = -a$ is always a possible solution, thus we consider assumption \ref{asmp:convexity} and \ref{asmp:existsunique} to hold.

We see that this problem is equivalent to \ref{eqn:SPN} and we can apply Lemma \ref{lem:PAC}; with confidence $1-\beta$, where $\beta = \left(1 - \eps\right)^{N}$, it holds that $\Prb\{\varrho^{\varphi}(\traj{p,\policy}{\sz \sim \mu,T}) \in \Delta_{\mu}^{p,\policy}: \varrho^{\varphi}(\traj{p,\policy}{\sz \sim \mu,T}) < r^{*}_{\mu,p,\policy}\} = \Prb\{\varrho^{\varphi}(\traj{p,\policy}{\sz \sim \mu,T}) \in \Delta_{\mu}^{p,\policy}: r^{*}_{\mu,p,\policy} \notin [-a, \varrho^{\varphi}(\traj{p,\policy}{\sz \sim \mu,T})]\} \leq \eps$, where $\Delta_{\mu}^{p,\policy}$ is the (unknown) distribution of trajectories in the task environment under $p$ and $\policy$ starting from $\sz \sim \mu$. To complete the proof, simply set $r^{*}_{\mu,p,\policy} = 0$.
\end{proof}

\begin{theorem}
\label{thm:bound_traj_prob}
    Suppose that, for a set of coefficients $\alpha_{t} \in \mathbb{R}_{+}$, we could constrain $\mu_{2}$, $p_{2}$ and $\policy_{2}$ so as to enforce the following bounds for all $t = 1, \dots, T$:
    \begin{gather}
        \sum_{q \in \mathcal{Q}} \int_{\mathcal{S}} P^{p_{2},\policy_{2}}(s',q' | s,q) W_{t-1}^{\mu_{1},p_{1},\policy_{1}}(s,q) ds
        \leq \alpha_{t} \sum_{q \in \mathcal{Q}} \int_{\mathcal{S}} P^{p_{1},\policy_{1}}(s',q' | s,q) W_{t-1}^{\mu_{1},p_{1},\policy_{1}}(s,q) ds \\
        \text{and} \quad \mu_{2}(s',q') \leq \alpha_{0} \mu_{1}(s',q')
        \quad \forall s' \in \mathcal{S}, q' \in \mathcal{Q}.
    \end{gather}
    It thus holds that
    \begin{equation}
        \Prb\{\traj{p_{2},\policy_{2}}{\sz \sim \mu_{2},T}: \sz \in (\mathcal{S}_{0},\mathcal{Q}_{0}), \dots, \sT \in (\mathcal{S}_{T},\mathcal{Q}_{T})\}
        \leq \Prb\{\traj{p_{1},\policy_{1}}{\sz \sim \mu_{1},T}: \sz \in (\mathcal{S}_{0},\mathcal{Q}_{0}), \dots, \sT \in (\mathcal{S}_{T},\mathcal{Q}_{T})\} \prod_{t=0}^{T}\alpha_{t}.
    \end{equation}
\end{theorem}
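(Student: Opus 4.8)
The plan is to reduce the integrated statement to a pointwise bound on the weight functions and prove that bound by induction on $t$. Writing $\Pi_{t} := \prod_{k=0}^{t}\alpha_{k}$, I would establish
\[
W_{t}^{\mu_{2},p_{2},\policy_{2}}(s',q') \leq \Pi_{t}\, W_{t}^{\mu_{1},p_{1},\policy_{1}}(s',q') \qquad \forall\, s'\in\mathcal{S},\ q'\in\mathcal{Q},\ t=0,\dots,T,
\]
and then integrate over $\mathcal{S}$ and sum over $\mathcal{Q}$ at $t=T$, invoking the identity $\Prb\{\cdots\}=\sum_{q\in\mathcal{Q}}\int_{\mathcal{S}} W_{T}^{\mu,p,\policy}(s,q)\,ds$ established just before the theorem.

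For the base case $t=0$, I substitute $W_{0}^{\mu,p,\policy}(s',q')=\mathds{1}_{\mathcal{S}_{0},\mathcal{Q}_{0}}(s',q')\,\mu(s',q')$ and apply the pointwise hypothesis $\mu_{2}\leq\alpha_{0}\mu_{1}$; nonnegativity of the indicator gives $W_{0}^{\mu_{2},p_{2},\policy_{2}}\leq\alpha_{0}W_{0}^{\mu_{1},p_{1},\policy_{1}}=\Pi_{0}W_{0}^{\mu_{1},p_{1},\policy_{1}}$. For the inductive step, assuming the claim at $t-1$, I expand $W_{t}^{\mu_{2},p_{2},\policy_{2}}$ through the forward recursion and make three monotone moves, each legitimate because every factor is nonnegative: (i) replace $W_{t-1}^{\mu_{2},p_{2},\policy_{2}}$ under the integral by $\Pi_{t-1}W_{t-1}^{\mu_{1},p_{1},\policy_{1}}$ using the inductive hypothesis (valid since $P^{p_{2},\policy_{2}}\geq0$); (ii) pull the constant $\Pi_{t-1}$ outside the sum and integral; (iii) invoke the first hypothesis of the theorem to bound $\sum_{q}\int_{\mathcal{S}}P^{p_{2},\policy_{2}}(s',q'|s,q)\,W_{t-1}^{\mu_{1},p_{1},\policy_{1}}(s,q)\,ds$ by $\alpha_{t}$ times the same expression with $P^{p_{1},\policy_{1}}$ in place of $P^{p_{2},\policy_{2}}$. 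After re-attaching $\mathds{1}_{\mathcal{S}_{t},\mathcal{Q}_{t}}(s',q')$, the right-hand side is exactly $\alpha_{t}\Pi_{t-1}W_{t}^{\mu_{1},p_{1},\policy_{1}}(s',q')=\Pi_{t}W_{t}^{\mu_{1},p_{1},\policy_{1}}(s',q')$, which closes the induction.

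With the pointwise bound in hand at $t=T$, summing over $q\in\mathcal{Q}$, integrating over $\mathcal{S}$, and applying the probability identity yields the claimed inequality with prefactor $\Pi_{T}=\prod_{t=0}^{T}\alpha_{t}$. I do not expect a genuine obstacle: the argument is a nonnegativity-driven monotone induction. The only points requiring care are the bookkeeping of the product index, so that the prefactor telescopes correctly from $\Pi_{t-1}$ to $\Pi_{t}$, and the order of moves (i)--(iii): the inductive substitution must precede the application of the first hypothesis, because that hypothesis is stated for the chain-$1$ weights $W_{t-1}^{\mu_{1},p_{1},\policy_{1}}$, not for $W_{t-1}^{\mu_{2},p_{2},\policy_{2}}$.
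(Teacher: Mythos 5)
Your proposal is correct and follows essentially the same route as the paper: a monotone induction establishing the pointwise bound $W_{t}^{\mu_{2},p_{2},\policy_{2}} \leq \prod_{k=0}^{t}\alpha_{k}\, W_{t}^{\mu_{1},p_{1},\policy_{1}}$, followed by marginalization at $t=T$. Your bookkeeping of the product index is in fact slightly cleaner than the paper's, which drops the $\alpha_{0}$ factor from the product in its closing lines despite having used it in the induction.
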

\begin{proof}
    \begin{align}
        W_{1}^{\mu_{2},p_{2},\policy_{2}}(s',q') & = \mathds{1}_{\mathcal{S}_{1},\mathcal{Q}_{1}}(s',q') \sum_{q \in \mathcal{Q}} \int_{\mathcal{S}} P^{p_{2},\policy_{2}}(s',q' | s,q) W_{0}^{\mu_{2},p_{2},\policy_{2}}(s,q) ds \\
        %
        & \leq \alpha_{0} \mathds{1}_{\mathcal{S}_{1},\mathcal{Q}_{1}}(s',q') \sum_{q \in \mathcal{Q}} \int_{\mathcal{S}} P^{p_{2},\policy_{2}}(s',q' | s,q) W_{0}^{\mu_{1},p_{1},\policy_{1}}(s,q) ds \\
        %
        & \leq \alpha_{0} \alpha_{1} \mathds{1}_{\mathcal{S}_{1},\mathcal{Q}_{1}}(s',q') \sum_{q \in \mathcal{Q}} \int_{\mathcal{S}} P^{p_{1},\policy_{1}}(s',q' | s,q) W_{0}^{\mu_{1},p_{1},\policy_{1}}(s,q) ds \\
        %
        & = W_{1}^{\mu_{1},p_{1},\policy_{1}}(s',q')\prod_{t'=0}^{1}\alpha_{t'}.
    \end{align}
    
    Assuming that $W_{t}^{\mu_{2},p_{2},\policy_{2}}(s',q') \leq W_{t}^{\mu_{1},p_{1},\policy_{1}}(s',q')\prod_{t'=0}^{t}\alpha_{t'}$ and $1 \leq t \leq T-1$,
    \begin{align}
        W_{t+1}^{\mu_{2},p_{2},\policy_{2}}(s',q') & = \mathds{1}_{\mathcal{S}_{t+1},\mathcal{Q}_{t+1}}(s',q') \sum_{q \in \mathcal{Q}} \int_{\mathcal{S}} P^{p_{2},\policy_{2}}(s',q' | s,q) W_{t}^{\mu_{2},p_{2},\policy_{2}}(s,q) ds \\
        %
        & \leq \mathds{1}_{\mathcal{S}_{t+1},\mathcal{Q}_{t+1}}(s',q') \sum_{q \in \mathcal{Q}} \int_{\mathcal{S}} P^{p_{2},\policy_{2}}(s',q' | s,q) W_{t}^{\mu_{1},p_{1},\policy_{1}}(s,q) ds \prod_{t'=0}^{t}\alpha_{t'} \\
        %
        & \leq \alpha_{t+1} \mathds{1}_{\mathcal{S}_{t+1},\mathcal{Q}_{t+1}}(s',q') \sum_{q \in \mathcal{Q}} \int_{\mathcal{S}} P^{p_{1},\policy_{1}}(s',q' | s,q) W_{t}^{\mu_{1},p_{1},\policy_{1}}(s,q) ds \prod_{t'=0}^{t}\alpha_{t'} \\
        %
        & = W_{t+1}^{\mu_{1},p_{1},\policy_{1}}(s',q')\prod_{t'=0}^{t+1}\alpha_{t'}.
    \end{align}
    
    From these two results we have proved by induction that $W_{t}^{\mu_{2},p_{2},\policy_{2}}(s') \leq W_{t}^{\mu_{1},p_{1},\policy_{1}}(s')\prod_{t'=1}^{t}\alpha_{t'}$ for all $t = 1, \dots, T$. Marginalizing over all $s' \in \mathcal{S}, q' \in \mathcal{Q}$ for $t=T$, we arrive at the following bound:
    \begin{equation}
        \Prb\{\traj{p_{2},\policy_{2}}{\sz \sim \mu_{2},T}: \sz \in (\mathcal{S}_{0},\mathcal{Q}_{0}), \dots, \sT \in (\mathcal{S}_{T},\mathcal{Q}_{T})\}
        %
        \leq \Prb\{\traj{p_{1},\policy_{1}}{\sz \sim \mu_{1},T}: \sz \in (\mathcal{S}_{0},\mathcal{Q}_{0}), \dots, \sT \in (\mathcal{S}_{T},\mathcal{Q}_{T})\} \prod_{t=1}^{T}\alpha_{t}.
    \end{equation}
\end{proof}

\begin{theorem}
\label{thm:policy_ratio}
    Suppose the following constraint holds:
    \begin{equation}
        p_{2}(s',q'|a,s,q) \policy_{2}(a|s,q)
        %
        \leq \alpha_{t} p_{1}(s',q'|a,s,q) \policy_{1}(a|s,q) \quad \forall a \in \mathcal{A}, s \in \mathcal{S}, q \in \mathcal{Q}.
    \end{equation}
    It thus holds that
    \begin{equation}
        \sum_{q \in \mathcal{Q}} \int_{\mathcal{S}} P^{p_{2},\policy_{2}}(s',q' | s,q) W_{t-1}^{\mu_{1},p_{1},\policy_{1}}(s,q) ds
        %
        \leq \alpha_{t} \sum_{q \in \mathcal{Q}} \int_{\mathcal{S}} P^{p_{1},\policy_{1}}(s',q' | s,q) W_{t-1}^{\mu_{1},p_{1},\policy_{1}}(s,q) ds
        %
        \quad \forall s'\in \mathcal{S}, q' \in \mathcal{Q}.
    \end{equation}
\end{theorem}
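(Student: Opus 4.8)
The plan is to peel the hypothesised pointwise constraint apart one integration (and summation) at a time, relying at each stage on monotonicity of the integral/sum and, crucially, on the nonnegativity of $W_{t-1}^{\mu_{1},p_{1},\policy_{1}}$ to preserve the direction of the inequality. Since $P^{p,\policy}$ is \emph{defined} by integrating $p\,\policy$ over the action space, the first step is essentially to recognise that the hypothesis is the same inequality as the conclusion, but ``before'' integrating over $a$ and averaging against $W_{t-1}$.

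First I would integrate the assumed bound $p_{2}(s',q'|a,s,q)\policy_{2}(a|s,q)\leq \alpha_{t}\,p_{1}(s',q'|a,s,q)\policy_{1}(a|s,q)$ over $a\in\mathcal{A}$. Because $\alpha_{t}\in\mathbb{R}_{+}$ is a constant independent of $a$, monotonicity of the integral lets me pull $\alpha_{t}$ out and obtain
\begin{equation}
    \int_{\mathcal{A}} p_{2}(s',q'|a,s,q)\policy_{2}(a|s,q)\,da \leq \alpha_{t}\int_{\mathcal{A}} p_{1}(s',q'|a,s,q)\policy_{1}(a|s,q)\,da ,
\end{equation}
which by the definition $P^{p,\policy}(s',q'|s,q)=\int_{\mathcal{A}} p(s',q'|a,s,q)\policy(a|s,q)\,da$ is exactly $P^{p_{2},\policy_{2}}(s',q'|s,q)\leq \alpha_{t}\,P^{p_{1},\policy_{1}}(s',q'|s,q)$, holding pointwise for every $s,q,s',q'$.

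Next I would multiply both sides by the weight $W_{t-1}^{\mu_{1},p_{1},\policy_{1}}(s,q)$ and then integrate over $s\in\mathcal{S}$ and sum over $q\in\mathcal{Q}$. The multiplication preserves the inequality precisely because $W_{t}^{\mu,p,\policy}$ takes values in $\mathbb{R}_{+}$; the subsequent integration and summation again preserve the inequality by monotonicity, and $\alpha_{t}$ factors out of both, giving the claimed
\begin{equation}
    \sum_{q \in \mathcal{Q}} \int_{\mathcal{S}} P^{p_{2},\policy_{2}}(s',q' | s,q) W_{t-1}^{\mu_{1},p_{1},\policy_{1}}(s,q)\,ds \leq \alpha_{t} \sum_{q \in \mathcal{Q}} \int_{\mathcal{S}} P^{p_{1},\policy_{1}}(s',q' | s,q) W_{t-1}^{\mu_{1},p_{1},\policy_{1}}(s,q)\,ds
\end{equation}
for all $s'\in\mathcal{S}$, $q'\in\mathcal{Q}$. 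The only genuine obstacle — really the single point one must not overlook — is this sign condition: multiplying an inequality by $W_{t-1}$ keeps its direction solely because $W_{t-1}\geq 0$, which is exactly why the recursion defines $W_{t}^{\mu,p,\policy}$ with codomain $\mathbb{R}_{+}$. Everything else is routine monotonicity of integration together with the linearity used to extract the constant $\alpha_{t}$; measurability and integrability hold because all the factors involved are densities or the nonnegative functions $W_{t-1}$.
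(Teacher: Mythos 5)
Your proof is correct and follows essentially the same route as the paper's: integrate the pointwise bound over $\mathcal{A}$ to obtain $P^{p_{2},\policy_{2}} \leq \alpha_{t} P^{p_{1},\policy_{1}}$, then multiply by $W_{t-1}^{\mu_{1},p_{1},\policy_{1}}$ and integrate over $\mathcal{S}$ and sum over $\mathcal{Q}$. Your explicit remark that the nonnegativity of $W_{t-1}$ is what preserves the inequality direction is a point the paper leaves implicit, but it is the same argument.
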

\begin{proof}
    \begin{align}
        p_{2}(s',q'|a,s,q) \policy_{2}(a|s,q) & \leq \alpha_{t} p_{1}(s',q'|a,s,q) \policy_{1}(a|s,q) \quad \forall a \in \mathcal{A}, s \in \mathcal{S}, q \in \mathcal{Q} \\
        %
        \int_{\mathcal{A}} p_{2}(s',q'|a,s,q) \policy_{2}(a|s,q) da & \leq \alpha_{t} \int_{\mathcal{A}} p_{1}(s',q'|a,s,q) \policy_{1}(a|s,q) da \quad \forall s', s \in \mathcal{S}, q \in \mathcal{Q}
        \\
        %
        P^{p_{2},\policy_{2}}(s',q' | s,q) & \leq \alpha_{t} P^{p_{1},\policy_{1}}(s',q' | s,q) \quad \forall s',s \in \mathcal{S}, q',q \in \mathcal{Q}
        \\
        %
        \sum_{q \in \mathcal{Q}} \int_{\mathcal{S}} P^{p_{2},\policy_{2}}(s',q' | s,q) W_{t-1}^{\mu_{1},p_{1},\policy_{1}}(s,q) ds & \leq \alpha_{t} \sum_{q \in \mathcal{Q}} \int_{\mathcal{S}} P^{p_{1},\policy_{1}}(s',q' | s,q) W_{t-1}^{\mu_{1},p_{1},\policy_{1}}(s,q) ds \quad \forall s'\in \mathcal{S}, q' \in \mathcal{Q}.
    \end{align}
\end{proof}

\begin{theorem}
\label{thm:task_fail_prob}
    Suppose that
    \begin{equation}
        \Prb\{\varrho^{\varphi}(\traj{p_{1},\policy_{1}}{\sz \sim \mu_{1},T}) \in \Delta: \varrho^{\varphi}(\traj{p_{1},\policy_{1}}{\sz \sim \mu_{1},T}) < 0\} \leq \eps,
    \end{equation}
    and for all $t = 1, \dots, T$,
    \begin{gather}
        p_{2}(s',q'|a,s,q) \policy_{2}(a|s,q)
        \leq \alpha_{t} p_{1}(s',q'|a,s,q) \policy_{1}(a|s,q) \; \text{and} \;
        \mu_{2}(s',q') \leq \alpha_{0} \mu_{1}(s',q') \quad \forall a \in \mathcal{A}, s \in \mathcal{S}, q \in \mathcal{Q}.
    \end{gather}
    It thus holds that
    \begin{equation}
        \Prb\{\varrho^{\varphi}(\traj{p_{2},\policy_{2}}{\sz \sim \mu_{2},T}) \in \Delta: \varrho^{\varphi}(\traj{p_{2},\policy_{2}}{\sz \sim \mu_{2},T}) < 0\} \leq \eps \prod_{t=0}^{T}\alpha_{t}.
    \end{equation}
\end{theorem}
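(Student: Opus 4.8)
The plan is to treat the violation event $B = \{\traj{}{} : \varrho^{\varphi}(\traj{}{}) < 0\}$ as a fixed, dynamics-independent measurable subset of the space of length-$T$ trajectories (it depends only on the trace through the measurable map $\varrho^{\varphi}$), and to show that the law of the trajectory under $(\mu_2, p_2, \policy_2)$ is dominated, on every such set, by $\prod_{t=0}^{T}\alpha_t$ times the law under $(\mu_1, p_1, \policy_1)$. Combined with the hypothesis $\Prb\{\varrho^{\varphi}(\traj{p_1,\policy_1}{\sz\sim\mu_1,T}) < 0\} \leq \eps$, evaluating this domination at $B$ immediately yields the claimed bound $\eps \prod_{t=0}^{T}\alpha_t$.

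First I would invoke Theorem \ref{thm:policy_ratio}: by the same marginalisation over $\mathcal{A}$ used in its proof, the hypothesised product constraint $p_2 \policy_2 \leq \alpha_t p_1 \policy_1$ yields the pointwise kernel domination $P^{p_2,\policy_2}(s',q'|s,q) \leq \alpha_t P^{p_1,\policy_1}(s',q'|s,q)$ and, in particular, supplies the first hypothesis of Theorem \ref{thm:bound_traj_prob}. Together with the supplied initial-distribution bound $\mu_2 \leq \alpha_0 \mu_1$, both hypotheses of Theorem \ref{thm:bound_traj_prob} hold, so that theorem already delivers the desired $\prod_{t=0}^{T}\alpha_t$ domination for every cylinder event $\{\sz \in (\mathcal{S}_0,\mathcal{Q}_0), \dots, \sT \in (\mathcal{S}_T,\mathcal{Q}_T)\}$.

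The remaining, and main, step is to lift this cylinder-level domination to the arbitrary violation set $B$, since for a general LTL formula $B$ is not itself a product of per-step sets. I would do this by passing to the joint trajectory density: the per-step kernel inequalities and the initial-distribution bound multiply along a length-$T$ path to give the pointwise density domination
\begin{equation}
\mu_2(\sz) \prod_{t=0}^{T-1} P^{p_2,\policy_2}(\s{t+1}|\st) \leq \left(\prod_{t=0}^{T}\alpha_t\right) \mu_1(\sz) \prod_{t=0}^{T-1} P^{p_1,\policy_1}(\s{t+1}|\st),
\end{equation}
valid at every trajectory. Integrating both sides over $B$ (equivalently, summing over the discrete components and integrating over the continuous components of $\sz, \dots, \sT$) shows that $\Prb\{\traj{p_2,\policy_2}{\sz\sim\mu_2,T} \in B\} \leq \prod_{t=0}^{T}\alpha_t \,\Prb\{\traj{p_1,\policy_1}{\sz\sim\mu_1,T} \in B\}$ for every measurable $B$; this is precisely the extension of Theorem \ref{thm:bound_traj_prob} from cylinders to general trajectory events. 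Specialising to $B = \{\varrho^{\varphi} < 0\}$ and applying the hypothesis completes the argument.

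I expect the main obstacle to be this lifting step rather than the inequality chaining: one must confirm that $\{\varrho^{\varphi} < 0\}$ is measurable (immediate, as $\varrho^{\varphi}$ is a measurable function of the finite-length trace) and that it is literally the same subset of trajectory space under both systems, so that comparing $\Prb$ for $(\mu_2, p_2, \policy_2)$ and for $(\mu_1, p_1, \policy_1)$ of the same event is legitimate. A purely cylinder-based alternative, decomposing $B$ into disjoint first-deviation cylinders and summing Theorem \ref{thm:bound_traj_prob} over them, would also work, but the density-domination route is cleaner and avoids any dependence on the particular syntactic form of $\varphi$.
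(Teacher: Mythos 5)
Your proof is correct, but it takes a genuinely different route from the paper's. The paper proves the theorem by staying entirely within the cylinder-event framework of Theorem \ref{thm:bound_traj_prob}: it introduces a notation for the sum of probabilities over all mutually exclusive choices of per-step sets $(\mathcal{S}_{0},\mathcal{Q}_{0}),\dots,(\mathcal{S}_{T},\mathcal{Q}_{T})$ whose cylinder events imply $\traj{}{}\not\models\varphi$, identifies that sum with $\Prb\{\varrho^{\varphi}<0\}$, and then bounds each term by $\prod_{t=0}^{T}\alpha_{t}$ times the corresponding $(\mu_{1},p_{1},\policy_{1})$ term using Theorems \ref{thm:bound_traj_prob} and \ref{thm:policy_ratio} before re-summing --- exactly the ``cylinder-based alternative'' you mention in passing at the end. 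Your main argument instead establishes pointwise domination of the joint trajectory density, $\mu_{2}(\sz)\prod_{t=0}^{T-1}P^{p_{2},\policy_{2}}(\s{t+1}|\st)\leq\bigl(\prod_{t=0}^{T}\alpha_{t}\bigr)\mu_{1}(\sz)\prod_{t=0}^{T-1}P^{p_{1},\policy_{1}}(\s{t+1}|\st)$, and integrates over the (measurable, dynamics-independent) violation set $B$. What your route buys is generality and economy: it gives the bound for \emph{every} measurable trajectory event at once, without needing the violation event to decompose into a disjoint union of product-form label cylinders --- a decomposition the paper asserts somewhat informally and which implicitly relies on satisfaction of $\varphi$ being determined by the sequence of label classes visited. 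What the paper's route buys is direct reuse of the already-proven Theorem \ref{thm:bound_traj_prob} and its $W_{t}$ forward-recursion machinery, avoiding any explicit appeal to joint densities on the path space. Both are sound; yours is arguably the cleaner lifting argument.
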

\begin{proof}
    We will define the notation
    \begin{equation}
        \sum_{\{(\mathcal{S}_{0},\mathcal{Q}_{0}),\dots,(\mathcal{S}_{T},\mathcal{Q}_{T}) : \traj{}{}\not\models\varphi\}} \Prb\{\traj{p,\policy}{\sz \sim \mu,T}: \sz \in (\mathcal{S}_{0},\mathcal{Q}_{0}), \dots, \sT \in (\mathcal{S}_{T},\mathcal{Q}_{T})\}
    \end{equation}
    as the sum of probabilities $\Prb\{\traj{p,\policy}{\sz \sim \mu,T}: \sz \in (\mathcal{S}_{0},\mathcal{Q}_{0}), \dots, \sT \in (\mathcal{S}_{T},\mathcal{Q}_{T})\}$ over all permutations of possible sets $(\mathcal{S}_{0},\mathcal{Q}_{0}),\dots,(\mathcal{S}_{T},\mathcal{Q}_{T}) \subseteq (\mathcal{S},\mathcal{Q})$ such that the event of drawing trajectory $\traj{p,\policy}{\sz \sim \mu,T}$ where $\sz \in (\mathcal{S}_{0},\mathcal{Q}_{0}), \dots, \sT \in (\mathcal{S}_{T},\mathcal{Q}_{T})$ results in $\traj{p,\policy}{\sz \sim \mu,T}\not\models\varphi$, and (importantly) also such that each event in the sum is mutually exclusive to all other events in the sum. In other words, the notation represents the sum of the probabilities of all (mutually exclusive) events of $(\mathcal{S}_{0},\mathcal{Q}_{0}),\dots,(\mathcal{S}_{T},\mathcal{Q}_{T}) \subseteq (\mathcal{S},\mathcal{Q})$ being such that $\traj{p,\policy}{\sz \sim \mu,T}\not\models\varphi$, and the sum is thus equal to the probability that $\traj{p,\policy}{\sz \sim \mu,T}\not\models\varphi$ and therefore $\varrho^{\varphi}(\traj{p,\policy}{\sz \sim \mu,T}) < 0$.
    
    To illustrate, consider the case for safety, with unsafe set $\mathcal{S}_{U} \subset \mathcal{S}$ (let $\mathcal{Q}_{U} = \emptyset$). For this case, the permutations are such that $\mathcal{S}_{t} \in \{\mathcal{S}_{U}, \mathcal{S}_{U}^{'}\}$ for all $t=0,\dots,T$ and there exists $t$ such that $\mathcal{S}_{t} = \mathcal{S}_{U}$, so the sum is over the probabilities $\Prb\{\traj{p,\policy}{\sz \sim \mu,T}: \sz \in (\mathcal{S}_{0},\mathcal{Q}_{0}), \dots, \sT \in (\mathcal{S}_{T},\mathcal{Q}_{T})\}$ for all (mutually exclusive) events of drawing $\traj{p,\policy}{\sz \sim \mu,T}$ such that least one set $\mathcal{S}_{t}$ is the unsafe set $\mathcal{S}_{U}$ (and all other sets are either $\mathcal{S}_{U}$ or $\mathcal{S}_{U}^{'}$).
    
    Now consider the probability that sampled trajectory $\traj{p_{2},\policy_{2}}{\sz \sim \mu_{2},T}$ is such that $\varrho^{\varphi}(\traj{p_{2},\policy_{2}}{\sz \sim \mu_{2},T}) < 0$:
    \begin{align}
        &\Prb\{\varrho^{\varphi}(\traj{p_{2},\policy_{2}}{\sz \sim \mu_{2},T}) \in \Delta: \varrho^{\varphi}(\traj{p_{2},\policy_{2}}{\sz \sim \mu_{2},T}) < 0\} \\
        %
        = & \sum_{\{(\mathcal{S}_{0},\mathcal{Q}_{0}),\dots,(\mathcal{S}_{T},\mathcal{Q}_{T}) : \traj{}{}\not\models\varphi\}} \Prb\{\traj{p_{2},\policy_{2}}{\sz \sim \mu_{2},T}: \sz \in (\mathcal{S}_{0},\mathcal{Q}_{0}), \dots, \sT \in (\mathcal{S}_{T},\mathcal{Q}_{T})\} \\
        %
        \leq & \sum_{\{(\mathcal{S}_{0},\mathcal{Q}_{0}),\dots,(\mathcal{S}_{T},\mathcal{Q}_{T}) : \traj{}{}\not\models\varphi\}} \Prb\{\traj{p_{1},\policy_{1}}{\sz \sim \mu_{1},T}: \sz \in (\mathcal{S}_{0},\mathcal{Q}_{0}), \dots, \sT \in (\mathcal{S}_{T},\mathcal{Q}_{T})\} \prod_{t=0}^{T}\alpha_{t} \\
        %
        = & \Prb\{\varrho^{\varphi}(\traj{p_{1},\policy_{1}}{\sz \sim \mu_{1},T}) \in \Delta: \varrho^{\varphi}(\traj{p_{1},\policy_{1}}{\sz \sim \mu_{1},T}) < 0\} \prod_{t=0}^{T}\alpha_{t} \\
        %
        \leq & \eps \prod_{t=0}^{T}\alpha_{t},
    \end{align}
    where Theorems \ref{thm:bound_traj_prob} and \ref{thm:policy_ratio} are used to obtain the first inequality.
\end{proof}

\begin{theorem}
\label{thm:policy_projection}
    Assuming diagonal Gaussian policies and using KL divergence as the distance metric for projection, the means and standard deviations of projected policy $\policyproj$ can be computed from $\policybase$ and $\policytask$ by solving the following convex optimization problem at each time step:
    \begin{equation}
        \begin{aligned}
            &\min_{\boldsymbol{\mu}_\mathrm{proj},\boldsymbol{\sigma}_\mathrm{proj}}~ & &J(\boldsymbol{\mu}_\mathrm{proj},\boldsymbol{\sigma}_\mathrm{proj}) \\
            &\mathrm{subject~to}~ & & \prod_{i=1}^{n}\left(\frac{\sigma_{\mathrm{base},i}}{\sigma_{\mathrm{proj},i}} e^{\frac{1}{2} \frac{\left(\mu_{\mathrm{proj},i} - \mu_{\mathrm{base},i}\right)^{2}}{\sigma_{\mathrm{base},i}^{2} - \sigma_{\mathrm{proj},i}^{2}} }\right) \leq \alpha,\\
            & ~ & & 0< \sigma_{\mathrm{proj},i} < \sigma_{\mathrm{base},i} \quad \forall i=1,\dots,n,
        \end{aligned}
    \end{equation}
    \begin{equation}
        \text{where} \quad J(\boldsymbol{\mu}_\mathrm{proj},\boldsymbol{\sigma}_\mathrm{proj})
        = \sum_{i=1}^{n} \left( - 2\ln\left(\sigma_{\mathrm{proj},i}\right) + \frac{\sigma_{\mathrm{proj},i}^{2}}{\sigma_{\mathrm{task},i}^{2}} + \frac{\left(\mu_{\mathrm{proj},i} - \mu_{\mathrm{task},i}\right)^{2}}{\sigma_{\mathrm{task},i}^{2}} \right).
    \end{equation}
\end{theorem}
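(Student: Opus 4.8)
The plan is to recognize the projection as minimizing the KL divergence $\kl{\policyproj}{\policytask}$ subject to the pointwise policy-ratio constraint inherited from Theorem~\ref{thm:policy_ratio}. Specializing that theorem to shared dynamics ($p_1 = p_2$) and identifying $\policy_1 = \policybase$, $\policy_2 = \policyproj$, the constraint $p_2 \policy_2 \le \alpha_t p_1 \policy_1$ collapses to $\policyproj(a \mid s,q) \le \alpha\, \policybase(a \mid s,q)$ for all $a \in \mathcal{A}$, i.e. $\sup_{a} \policyproj(a)/\policybase(a) \le \alpha$ for the fixed $(s,q)$ at which we project. So the first step is to turn this supremum into a closed-form expression in the policy parameters.

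Since both policies are diagonal Gaussians, I would write the log-ratio $\ln(\policyproj(a)/\policybase(a))$, which decomposes as a sum over the $n$ coordinates, each contributing $\ln(\sigma_{\mathrm{base},i}/\sigma_{\mathrm{proj},i})$ plus a quadratic in $a_i$. The supremum over $a$ therefore separates into $n$ independent one-dimensional maximizations. The quadratic in $a_i$ has leading coefficient $\frac{1}{2}(\sigma_{\mathrm{base},i}^{-2} - \sigma_{\mathrm{proj},i}^{-2})$; for the maximum to be finite this must be negative, which is exactly the constraint $0 < \sigma_{\mathrm{proj},i} < \sigma_{\mathrm{base},i}$. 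Completing the square then yields the maximal coordinate value $\frac{1}{2}(\mu_{\mathrm{proj},i}-\mu_{\mathrm{base},i})^2/(\sigma_{\mathrm{base},i}^2 - \sigma_{\mathrm{proj},i}^2)$; exponentiating the summed maximum and demanding it be $\le \alpha$ reproduces the product constraint in the statement.

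For the objective, I would substitute the standard closed form for the KL divergence between two diagonal Gaussians, $\kl{\policyproj}{\policytask} = \sum_i [\ln(\sigma_{\mathrm{task},i}/\sigma_{\mathrm{proj},i}) + (\sigma_{\mathrm{proj},i}^2 + (\mu_{\mathrm{proj},i}-\mu_{\mathrm{task},i})^2)/(2\sigma_{\mathrm{task},i}^2) - \frac{1}{2}]$. Multiplying by $2$ and discarding the additive terms that depend only on $\sigma_{\mathrm{task},i}$ (constants for the optimization) leaves exactly $J(\boldsymbol{\mu}_\mathrm{proj}, \boldsymbol{\sigma}_\mathrm{proj})$, so minimizing $J$ is equivalent to minimizing the KL projection distance.

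Finally I would verify convexity. The objective is separable and each summand is convex: $-2\ln\sigma_{\mathrm{proj},i}$ and $\sigma_{\mathrm{proj},i}^2/\sigma_{\mathrm{task},i}^2$ are convex in $\sigma_{\mathrm{proj},i}$, and $(\mu_{\mathrm{proj},i}-\mu_{\mathrm{task},i})^2/\sigma_{\mathrm{task},i}^2$ is convex in $\mu_{\mathrm{proj},i}$; the box constraint is an interval. Taking logarithms of the ratio constraint gives $\sum_i [\ln\sigma_{\mathrm{base},i} - \ln\sigma_{\mathrm{proj},i} + \frac{1}{2}(\mu_{\mathrm{proj},i}-\mu_{\mathrm{base},i})^2/(\sigma_{\mathrm{base},i}^2 - \sigma_{\mathrm{proj},i}^2)] \le \ln\alpha$, a sublevel set. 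The main obstacle is the convexity of the coupling term $(\mu_{\mathrm{proj},i}-\mu_{\mathrm{base},i})^2/(\sigma_{\mathrm{base},i}^2 - \sigma_{\mathrm{proj},i}^2)$: writing it as $u^2/v$ with $u = \mu_{\mathrm{proj},i}-\mu_{\mathrm{base},i}$ affine and $v = \sigma_{\mathrm{base},i}^2 - \sigma_{\mathrm{proj},i}^2$, note that $u^2/v$ is jointly convex on $v>0$ and non-increasing in $v$, while $v$ is concave in $\sigma_{\mathrm{proj},i}$; by the vector composition rule (affine inner in the first argument, concave inner with non-increasing outer in the second) the composition is convex on the feasible region $\sigma_{\mathrm{proj},i} < \sigma_{\mathrm{base},i}$. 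Together with the convex objective and convex box constraint, this establishes that the projection is a convex program.
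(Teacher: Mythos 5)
Your proposal is correct and follows essentially the same route as the paper's proof: maximize the per-coordinate log-ratio (you complete the square where the paper explicitly solves for and substitutes the stationary point $a_i^*$, arriving at the same bound $\tfrac{1}{2}(\mu_{\mathrm{proj},i}-\mu_{\mathrm{base},i})^2/(\sigma_{\mathrm{base},i}^2-\sigma_{\mathrm{proj},i}^2)$), then substitute the closed-form diagonal-Gaussian KL and drop additive constants. Your convexity argument via the quad-over-lin vector composition rule is in fact more explicit than the paper's, which only asserts convexity of the ratio constraint on $\sigma_{\mathrm{proj},i}\in(0,\sigma_{\mathrm{base},i})$ in a footnote.
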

\begin{proof}
    The diagonal Gaussian policy distribution for an $n$-dimensional continuous action space is given by
    \begin{equation}
        \policy(\mathbf{a}|s,q) = \mathcal{N}\left(\mathbf{a};\boldsymbol{\mu},\mathbf{\Sigma}\right) = \frac{1}{(2\pi)^{\frac{n}{2}}\lvert\mathbf{\Sigma}\rvert^\frac{1}{2}} e^{-\frac{1}{2}\left(\mathbf{a} - \boldsymbol{\mu}\right)^{T}\mathbf{\Sigma}^{-1}\left(\mathbf{a} - \boldsymbol{\mu}\right)}, \quad \mathbf{\Sigma} = \mathrm{diag}\left(\boldsymbol{\sigma}^{2}\right).
    \end{equation}
    
    We will first derive an expression for $\alpha$ as a function of the projected policy means $\boldsymbol{\mu}_{\mathrm{proj}}$ and standard deviations $\boldsymbol{\sigma}_{\mathrm{proj}}$. The policy ratio can be written as
    \begin{align}
        \frac{\policyproj(\mathbf{a}|s,q)}{\policybase(\mathbf{a}|s,q)} & = \frac{\frac{1}{(2\pi)^{\frac{n}{2}}\lvert\mathbf{\Sigma}_{\mathrm{proj}}\rvert^\frac{1}{2}} e^{-\frac{1}{2}\left(\mathbf{a} - \boldsymbol{\mu}_{\mathrm{proj}}\right)^{T}\mathbf{\Sigma}_{\mathrm{proj}}^{-1}\left(\mathbf{a} - \boldsymbol{\mu}_{\mathrm{proj}}\right)}}{\frac{1}{(2\pi)^{\frac{n}{2}}\lvert\mathbf{\Sigma}_{\mathrm{base}}\rvert^\frac{1}{2}} e^{-\frac{1}{2}\left(\mathbf{a} - \boldsymbol{\mu}_{\mathrm{base}}\right)^{T}\mathbf{\Sigma}_{\mathrm{base}}^{-1}\left(\mathbf{a} - \boldsymbol{\mu}_{\mathrm{base}}\right)}}\\
        %
        & = \frac{\lvert\mathbf{\Sigma}_{\mathrm{base}}\rvert^\frac{1}{2}}{\lvert\mathbf{\Sigma}_{\mathrm{proj}}\rvert^\frac{1}{2}} e^{-\frac{1}{2}\left(\mathbf{a} - \boldsymbol{\mu}_{\mathrm{proj}}\right)^{T}\mathbf{\Sigma}_{\mathrm{proj}}^{-1}\left(\mathbf{a} - \boldsymbol{\mu}_{\mathrm{proj}}\right)+\frac{1}{2}\left(\mathbf{a} - \boldsymbol{\mu}_{\mathrm{base}}\right)^{T}\mathbf{\Sigma}_{\mathrm{base}}^{-1}\left(\mathbf{a} - \boldsymbol{\mu}_{\mathrm{base}}\right)}.
    \end{align}
    
    Note that $\mathbf{\Sigma} = \mathrm{diag}\left(\boldsymbol{\sigma}^{2}\right)$, thus $\lvert\mathbf{\Sigma}\rvert = \prod_{i=1}^{n}\sigma_{i}^{2}$ and $\mathbf{\Sigma}^{-1} = \mathrm{diag}\left(\frac{1}{\boldsymbol{\sigma}^{2}}\right)$. Substituting,
    \begin{equation}
        \frac{\policyproj(\mathbf{a}|s,q)}{\policybase(\mathbf{a}|s,q)} = \frac{\prod_{i=1}^{n}\sigma_{\mathrm{base},i}}{\prod_{i=1}^{n}\sigma_{\mathrm{proj},i}} e^{-\frac{1}{2}\sum_{i=1}^{n}\left( \frac{1}{\sigma_{\mathrm{proj},i}^{2}} \left( a_{i} - \mu_{\mathrm{proj},i} \right)^{2} - \frac{1}{\sigma_{\mathrm{base},i}^{2}} \left( a_{i} -  \mu_{\mathrm{base},i}  \right)^{2} \right)}.
    \end{equation}
    
    We want to find the location of the stationary point $\mathbf{a}^{*}$. The stationary point $a_{i}^{*}$ is such that
    \begin{align}
        \frac{2}{\sigma_{\mathrm{proj},i}^{2}} \left( a_{i}^{*} - \mu_{\mathrm{proj},i} \right) - \frac{2}{\sigma_{\mathrm{base},i}^{2}} \left( a_{i}^{*} -  \mu_{\mathrm{base},i}  \right) & = 0\\
        %
        \frac{1}{\sigma_{\mathrm{proj},i}^{2}} \left( a_{i}^{*} - \mu_{\mathrm{proj},i} \right) & = \frac{1}{\sigma_{\mathrm{base},i}^{2}} \left( a_{i}^{*} -  \mu_{\mathrm{base},i}  \right)\\
        %
        \sigma_{\mathrm{base},i}^{2}a_{i}^{*} - \sigma_{\mathrm{base},i}^{2}\mu_{\mathrm{proj},i} & = \sigma_{\mathrm{proj},i}^{2}a_{i}^{*} - \sigma_{\mathrm{proj},i}^{2}\mu_{\mathrm{base},i}\\
        %
        \left(\sigma_{\mathrm{base},i}^{2} - \sigma_{\mathrm{proj},i}^{2}\right)a_{i}^{*} & = \sigma_{\mathrm{base},i}^{2}\mu_{\mathrm{proj},i} - \sigma_{\mathrm{proj},i}^{2}\mu_{\mathrm{base},i}\\
        %
        \therefore a_{i}^{*} & = \frac{\sigma_{\mathrm{base},i}^{2}}{\sigma_{\mathrm{base},i}^{2} - \sigma_{\mathrm{proj},i}^{2}}\mu_{\mathrm{proj},i} - \frac{\sigma_{i}^{2}}{\sigma_{\mathrm{base},i}^{2} - \sigma_{\mathrm{proj},i}^{2}}\mu_{\mathrm{base},i}.
    \end{align}
    
    We can use this to find an expression for $\left( a_{i}^{*} - \mu_{\mathrm{proj},i} \right)^{2}$:
    \begin{align}
        a_{i}^{*} - \mu_{\mathrm{proj},i} & = \frac{\sigma_{\mathrm{base},i}^{2}}{\sigma_{\mathrm{base},i}^{2} - \sigma_{\mathrm{proj},i}^{2}}\mu_{\mathrm{proj},i}- \frac{\sigma_{\mathrm{base},i}^{2} - \sigma_{\mathrm{proj},i}^{2}}{\sigma_{\mathrm{base},i}^{2} - \sigma_{\mathrm{proj},i}^{2}}\mu_{\mathrm{proj},i} - \frac{\sigma_{\mathrm{proj},i}^{2}}{\sigma_{\mathrm{base},i}^{2} - \sigma_{\mathrm{proj},i}^{2}}\mu_{\mathrm{base},i}\\
        %
        & = \frac{\sigma_{\mathrm{proj},i}^{2}}{\sigma_{\mathrm{base},i}^{2} - \sigma_{\mathrm{proj},i}^{2}}\left(\mu_{\mathrm{proj},i} - \mu_{\mathrm{base},i}\right)\\
        %
        \therefore \left( a_{i}^{*} - \mu_{\mathrm{proj},i} \right)^{2} & = \frac{\left(\sigma_{\mathrm{proj},i}^{2}\right)^{2}}{\left(\sigma_{\mathrm{base},i}^{2} - \sigma_{\mathrm{proj},i}^{2}\right)^{2}}\left(\mu_{\mathrm{proj},i} - \mu_{\mathrm{base},i}\right)^{2}.
    \end{align}
    
    We can do the same for $\left( a_{i}^{*} - \mu_{\mathrm{base},i} \right)^{2}$:
    \begin{align}
        a_{i}^{*} - \mu_{\mathrm{base},i} & = \frac{\sigma_{\mathrm{base},i}^{2}}{\sigma_{\mathrm{base},i}^{2} - \sigma_{\mathrm{proj},i}^{2}}\mu_{\mathrm{proj},i} - \frac{\sigma_{\mathrm{proj},i}^{2}}{\sigma_{\mathrm{base},i}^{2} - \sigma_{\mathrm{proj},i}^{2}}\mu_{\mathrm{base},i} - \frac{\sigma_{\mathrm{base},i}^{2} - \sigma_{\mathrm{proj},i}^{2}}{\sigma_{\mathrm{base},i}^{2} - \sigma_{\mathrm{proj},i}^{2}}\mu_{\mathrm{base},i}\\
        %
        & = \frac{\sigma_{\mathrm{base},i}^{2}}{\sigma_{\mathrm{base},i}^{2} - \sigma_{\mathrm{proj},i}^{2}}\left(\mu_{\mathrm{proj},i} - \mu_{\mathrm{base},i}\right)\\
        %
        \therefore \left( a_{i}^{*} - \mu_{\mathrm{base},i} \right)^{2} & = \frac{\left(\sigma_{\mathrm{base},i}^{2}\right)^{2}}{\left(\sigma_{\mathrm{base},i}^{2} - \sigma_{\mathrm{proj},i}^{2}\right)^{2}}\left(\mu_{\mathrm{proj},i} - \mu_{\mathrm{base},i}\right)^{2}.
    \end{align}
    
    With these two results we can write the following:
    \begin{align}
        \frac{1}{\sigma_{\mathrm{proj},i}^{2}}\left( a_{i}^{*} - \mu_{\mathrm{proj},i} \right)^{2} - \frac{1}{\sigma_{\mathrm{base},i}^{2}}\left( a_{i}^{*} - \mu_{\mathrm{base},i} \right)^{2} & = \frac{\sigma_{\mathrm{proj},i}^{2}}{\left(\sigma_{\mathrm{base},i}^{2} - \sigma_{\mathrm{proj},i}^{2}\right)^{2}}\left(\mu_{\mathrm{proj},i} - \mu_{\mathrm{base},i}\right)^{2}\\
        %
        & -\frac{\sigma_{\mathrm{base},i}^{2}}{\left(\sigma_{\mathrm{base},i}^{2} - \sigma_{\mathrm{proj},i}^{2}\right)^{2}}\left(\mu_{\mathrm{proj},i} - \mu_{\mathrm{base},i}\right)^{2}\\
        %
        & = -\frac{\sigma_{\mathrm{base},i}^{2} - \sigma_{\mathrm{proj},i}^{2}}{\left(\sigma_{\mathrm{base},i}^{2} - \sigma_{\mathrm{proj},i}^{2}\right)^{2}}\left(\mu_{\mathrm{proj},i} - \mu_{\mathrm{base},i}\right)^{2}\\
        %
        & = -\frac{\left(\mu_{\mathrm{proj},i} - \mu_{\mathrm{base},i}\right)^{2}}{\sigma_{\mathrm{base},i}^{2} - \sigma_{\mathrm{proj},i}^{2}}.
    \end{align}
    
    If all $\sigma_{\mathrm{proj},i} < \sigma_{\mathrm{base},i}$ then the point $\mathbf{a}^{*}$ is a maximum and yields a finite policy ratio. Substituting this stationary point into our expression for the policy ratio, we arrive at
    \begin{equation}
        \alpha(\boldsymbol{\mu}_{\mathrm{proj}},\boldsymbol{\sigma}_{\mathrm{proj}}) = \max_{\mathbf{a} \in \mathcal{A}}\left(\frac{\policyproj(\mathbf{a} |s,q)}{\policybase(\mathbf{a} |s,q)}\right) = \frac{\prod_{i=1}^{n}\sigma_{\mathrm{base},i}}{\prod_{i=1}^{n}\sigma_{\mathrm{proj},i}} e^{\frac{1}{2}\sum_{i=1}^{n} \frac{\left(\mu_{\mathrm{proj},i} - \mu_{\mathrm{base},i}\right)^{2}}{\sigma_{\mathrm{base},i}^{2} - \sigma_{\mathrm{proj},i}^{2}} }.
    \end{equation}
    We have arrived at an expression for the maximum policy ratio at a given state as a function of the means and standard deviations of the $\policyproj$ at that state. Projecting $\policytask$ onto $\mathit{\Pi}_{\alpha,\policybase}$ at each state becomes a problem of finding $\boldsymbol{\mu}_{\mathrm{proj}}$ and $\boldsymbol{\sigma}_{\mathrm{proj}}$ such that $\alpha(\boldsymbol{\mu}_{\mathrm{proj}},\boldsymbol{\sigma}_{\mathrm{proj}}) \leq \alpha$ while remaining as close as possible to $\policytask$. We can write this as the following constrained minimization problem:
    \begin{equation}
        \begin{aligned}
            &\min_{\boldsymbol{\mu}_\mathrm{proj},\boldsymbol{\sigma}_\mathrm{proj}}~ & &J(\boldsymbol{\mu}_\mathrm{proj},\boldsymbol{\sigma}_\mathrm{proj}) = D_{KL}\left( \mathcal{N}\left(\boldsymbol{\mu}_\mathrm{proj},\mathrm{diag}\left(\boldsymbol{\sigma}_\mathrm{proj}^{2}\right)\right)\lvert\rvert\mathcal{N}\left(\boldsymbol{\mu}_\mathrm{task},\mathrm{diag}\left(\boldsymbol{\sigma}_\mathrm{task}^{2}\right)\right) \right) \\
            &\mathrm{subject~to}~ & & \frac{\prod_{i=1}^{n}\sigma_{\mathrm{base},i}}{\prod_{i=1}^{n}\sigma_{\mathrm{proj},i}} e^{\frac{1}{2}\sum_{i=1}^{n} \frac{\left(\mu_{\mathrm{proj},i} - \mu_{\mathrm{base},i}\right)^{2}}{\sigma_{\mathrm{base},i}^{2} - \sigma_{\mathrm{proj},i}^{2}} } \leq \alpha,\\
            & ~ & & 0< \sigma_{\mathrm{proj},i} < \sigma_{\mathrm{base},i} \quad \forall i=1,\dots,n.
        \end{aligned}
    \end{equation}
    
    We will now write out the KL divergence term that we aim to minimize. It can be shown\footnote{\url{https://statproofbook.github.io/P/mvn-kl.html}} that the KL divergence between two multivariate Gaussians takes the form
    \begin{equation}
        D_{KL}\left( \mathcal{N}\left(\boldsymbol{\mu}_{1},\boldsymbol{\Sigma}_{1}\right)\lvert\rvert\mathcal{N}\left(\boldsymbol{\mu}_{2},\boldsymbol{\Sigma}_{2}\right) \right)
        = \frac{1}{2}\left(\ln\left(\frac{\lvert \boldsymbol{\Sigma}_{2} \rvert}{\lvert  \boldsymbol{\Sigma}_{1} \rvert}\right) - n + \mathrm{tr}\left(\boldsymbol{\Sigma}_{2}^{-1}\boldsymbol{\Sigma}_{1}\right) + \left(\boldsymbol{\mu}_{2} - \boldsymbol{\mu}_{1}\right)^{T}\boldsymbol{\Sigma}_{2}^{-1}\left(\boldsymbol{\mu}_{2} - \boldsymbol{\mu}_{1}\right)\right).
    \end{equation}
    
    Assuming that $\boldsymbol{\Sigma}_{1} = \mathrm{diag}\left(\boldsymbol{\sigma}_{1}^{2}\right)$ and $\boldsymbol{\Sigma}_{2} = \mathrm{diag}\left(\boldsymbol{\sigma}_{2}^{2}\right)$, we can re-write this as
    \begin{align}
            & D_{KL}\left( \mathcal{N}\left(\boldsymbol{\mu}_{1},\mathrm{diag}\left(\boldsymbol{\sigma}_{1}^{2}\right)\right)\lvert\rvert\mathcal{N}\left(\boldsymbol{\mu}_{2},\mathrm{diag}\left(\boldsymbol{\sigma}_{2}^{2}\right)\right) \right) \\
            = & \frac{1}{2}\left(\ln\left(\frac{\prod_{i=1}^{n}\sigma_{2,i}^{2}}{\prod_{i=1}^{n}\sigma_{1,i}^{2}}\right) - n + \sum_{i=1}^{n}\frac{\sigma_{1,i}^{2}}{\sigma_{2,i}^{2}} + \sum_{i=1}^{n}\frac{\left(\mu_{2,i} - \mu_{1,i}\right)^{2}}{\sigma_{2,i}^{2}} \right)\\
            = & \frac{1}{2} \sum_{i=1}^{n} \left( \ln\left(\frac{\sigma_{2,i}^{2}}{\sigma_{1,i}^{2}}\right) - 1 + \frac{\sigma_{1,i}^{2}}{\sigma_{2,i}^{2}} + \frac{\left(\mu_{2,i} - \mu_{1,i}\right)^{2}}{\sigma_{2,i}^{2}} \right)\\
            = & \frac{1}{2} \sum_{i=1}^{n} \left( 2\ln\left(\sigma_{2,i}\right) - 2\ln\left(\sigma_{1,i}\right) - 1 + \frac{\sigma_{1,i}^{2}}{\sigma_{2,i}^{2}} + \frac{\left(\mu_{2,i} - \mu_{1,i}\right)^{2}}{\sigma_{2,i}^{2}} \right).
    \end{align}
    
    Let $\policy_{1} = \policyproj$ and $\policy_{2} = \policytask$. Since we only seek the minimizer of the optimization problem and not the minimum value itself, we can discard constant additive terms in the objective function, leaving us with the following:
    \begin{equation}
        J(\boldsymbol{\mu}_\mathrm{proj},\boldsymbol{\sigma}_\mathrm{proj}) = \sum_{i=1}^{n} \left( - 2\ln\left(\sigma_{\mathrm{proj},i}\right) + \frac{\sigma_{\mathrm{proj},i}^{2}}{\sigma_{\mathrm{task},i}^{2}} + \frac{\left(\mu_{\mathrm{proj},i} - \mu_{\mathrm{task},i}\right)^{2}}{\sigma_{\mathrm{task},i}^{2}} \right).
    \end{equation}
    
    From our analysis, we can rewrite the constrained minimization problem for projection as
    \begin{equation}
        \begin{aligned}
            &\min_{\boldsymbol{\mu}_\mathrm{proj},\boldsymbol{\sigma}_\mathrm{proj}}~ & &\sum_{i=1}^{n} \left( - 2\ln\left(\sigma_{\mathrm{proj},i}\right) + \frac{\sigma_{\mathrm{proj},i}^{2}}{\sigma_{\mathrm{task},i}^{2}} + \frac{\left(\mu_{\mathrm{proj},i} - \mu_{\mathrm{task},i}\right)^{2}}{\sigma_{\mathrm{task},i}^{2}} \right) \\
            &\mathrm{subject~to}~ & & \prod_{i=1}^{n}\left(\frac{\sigma_{\mathrm{base},i}}{\sigma_{\mathrm{proj},i}} e^{\frac{1}{2} \frac{\left(\mu_{\mathrm{proj},i} - \mu_{\mathrm{base},i}\right)^{2}}{\sigma_{\mathrm{base},i}^{2} - \sigma_{\mathrm{proj},i}^{2}} }\right) \leq \alpha,\\
            & ~ & & 0< \sigma_{\mathrm{proj},i} < \sigma_{\mathrm{base},i} \quad \forall i=1,\dots,n.
        \end{aligned}
    \end{equation}
    This problem can be viewed as $n$ constrained minimization problems, one for each dimension in action space, but also jointly coupled by the first constraint. Since not only the objective function and domain but also the feasible set\footnote{While the first constraint is non-convex for general $\sigma_{\mathrm{proj},i}$, it is convex over the domain $\sigma_{\mathrm{proj},i} \in (0, \sigma_{\mathrm{base},i})$, which is enforced by the other constraints.} are convex with respect to each optimization variable, this is a convex minimization problem.
\end{proof}

\section{Observations From the Theoretical Results}

\subsection{On the Conservativeness of the Upper Bound}
\label{appendix:conservativeness_of_upper_bound}

\sloppy The upper bound on the probability of property violation under $(\mu_{2},p_{2},\policy_{2})$, $\Prb\{\varrho^{\varphi}(\traj{p_{2},\policy_{2}}{\sz \sim \mu_{2},T}) \in \Delta: \varrho^{\varphi}(\traj{p_{2},\policy_{2}}{\sz \sim \mu_{2},T}) < 0\} \leq \eps \prod_{t=0}^{T}\alpha_{t}$, obtained by ensuring constraints $\mu_{2}(s',q') \leq \alpha_{0} \mu_{1}(s',q')$ and $p_{2}(s',q'|a,s,q) \policy_{2}(a|s,q) \leq \alpha_{t} p_{1}(s',q'|a,s,q) \policy_{1}(a|s,q)$ for all $a \in \mathcal{A}, s \in \mathcal{S}, q \in \mathcal{Q}$ and applying Theorem \ref{thm:task_fail_prob}, is inherently conservative. This is made clear by observing that $p_{2}(s',q'|a,s,q) \policy_{2}(a|s,q) \leq \alpha_{t} p_{1}(s',q'|a,s,q) \policy_{1}(a|s,q) = p_{2}(s',q'|a,s,q) \policy_{2}(a|s,q) + p_{\mathrm{art}}(s',q'|s,q)$, where $p_{\mathrm{art}}(s',q'|s,q)\geq 0$ is an artificial probability mass that we add to $p_{2} \policy_{2}$ when computing the upper bound to make it equal to $p_{1} \policy_{1}$ but scaled up by $\alpha$. This artificial probability mass increases the upper bound since we are adding to it an `extra' probability that does not actually exist. The bound is only tight when $p_{\mathrm{art}}(s',q'|s,q)= 0$ for all $s,s' \in \mathcal{S}, q,q' \in \mathcal{Q}$, which requires the trivial case where $\alpha_{t} = 1$ where $p_{2} \policy_{2} = p_{1} \policy_{1}$ (and also $\alpha_{0} = 1$ where $\mu_{2}=\mu_{1}$). The larger we make $\alpha_{t}$, the more artificial probability mass we must add, making the bound more conservative. Furthermore, the addition of this artificial probability mass is compounded exponentially when we multiply by $\alpha_{t}$ over $t = 1, \dots, T$, making our total multiplicative increase $\prod_{t=0}^{T}\alpha_{t}$ very conservative for applications with large episode length $T$.

\subsection{Application to Robust Control Problems}
\label{extension_robust_control}

In Section 5 we focused on maintaining a bound on property satisfaction when using a modified policy $\policytask$, however we see that Theorem \ref{thm:task_fail_prob} also provides a bound when $\mu_{2}$ and $p_{2}$ differ from $\mu_{1}$ and $p_{1}$. Thus, our theoretical results can also be applied to robust control settings where we require a bound on property satisfaction for a perturbed system, where nominal system dynamics $(\mu_{\mathrm{nom}},p_{\mathrm{nom}},\policy_{\mathrm{nom}})$, for which an upper bound on probability of property violation is known to be $\eps_{\mathrm{nom}}$ with confidence $1 - \beta$, are perturbed to $(\mu_{\mathrm{per}},p_{\mathrm{per}},\policy_{\mathrm{per}})$. As long as the set of all possible perturbations is known to fall within the constraints outlined by Theorem \ref{thm:task_fail_prob}, an upper bound on probability of property violation for the perturbed system is $\eps_{\mathrm{per}} = \eps_{\mathrm{nom}} \prod_{t=0}^{T}\alpha_{t}$ with confidence $1 - \beta$.

\subsection{On the Relationship Between Stochasticity of the Base Policy and the Size of the Feasible Set of the Projected Policy}
We note that $\mathit{\Pi}_{\alpha, \policybase}$ will generally be larger when $\policybase$ is more exploratory with a higher standard deviation. If $\policybase$ and $\policyproj$ are `tall' Gaussians with small `width', then even small deviations in the mean of $\policyproj$ from that of $\policybase$ will cause a rapid increase in policy ratio, while if they are made `flatter' with larger width, the same deviations in the mean of $\policyproj$ will cause a smaller increase in policy ratio. This means that a more exploratory $\policybase$ requires a lower $\alpha$ and therefore yields lower $\eps_{\mathrm{task}}$ for the same deviation in mean of $\policyproj$ from $\policybase$; this is intuitive, since we would expect safety guarantees from a more exploratory $\policybase$ to be more informative when making guarantees for the modified policy.

\section{Implementing the Policy Projection Method}

The convex minimization problem is solved at every time step using CVXPY \cite{CVXPY_1,CVXPY_2}. In order to use CVXPY, the problem must adhere to the Disciplined Convex Programming (DCP) rules described in its documentation. The DCP-adherent problem given to CVXPY is as follows:
\begin{equation}
    \begin{aligned}
        &\min_{\boldsymbol{\mu}_\mathrm{proj}\in \mathbb{R},\boldsymbol{\sigma}_\mathrm{proj}\in \mathbb{R}_{++}} ~ & & - 2 \sum_{i=1}^{n} \ln\left(\sigma_{\mathrm{proj},i}\right) + \sum_{i=1}^{n} \left(\frac{\sigma_{\mathrm{proj},i}}{\sigma_{\mathrm{task},i}}\right)^{2} + \sum_{i=1}^{n}\left(\frac{\mu_{\mathrm{proj},i} - \mu_{\mathrm{task},i}}{\sigma_{\mathrm{task},i}}\right)^{2} \\
        &\mathrm{subject~to}~ & & \sum_{i=1}^{n}\ln (\sigma_{\mathrm{base},i}) - \sum_{i=1}^{n} \ln (\sigma_{\mathrm{proj},i}) + \frac{1}{2} \sum_{i=1}^{n} \frac{\left(\mu_{\mathrm{proj},i} - \mu_{\mathrm{base},i}\right)^{2}}{\sigma_{\mathrm{base},i}^{2} - \sigma_{\mathrm{proj},i}^{2}} \leq \ln(\alpha),\\
        & ~ & & \boldsymbol{\sigma}_{\mathrm{proj}} + \eta\mathbf{1} \leq \boldsymbol{\sigma}_{\mathrm{base}}.
    \end{aligned}
\end{equation}
with all operations implemented using the appropriate CVXPY methods, and $\eta$ is a small strictly positive constant used to ensure that $0 < \sigma_{\mathrm{proj},i} < \sigma_{\mathrm{base},i}$. Implementation is mostly straightforward except for the term $\frac{1}{2} \sum_{i=1}^{n} \frac{\left(\mu_{\mathrm{proj},i} - \mu_{\mathrm{base},i}\right)^{2}}{\sigma_{\mathrm{base},i}^{2} - \sigma_{\mathrm{proj},i}^{2}}$, since optimization variables are in both the numerator and the denominator, which is generally not DCP. To overcome this, we must use the CVXPY method $\mathrm{quad\_over\_lin}(\mathbf{X},y) = \frac{\sum_{i,j}X_{ij}^{2}}{y}$ and sum over the action space dimensions:

\begin{equation}
    \frac{1}{2} \sum_{i=1}^{n} \frac{\left(\mu_{\mathrm{proj},i} - \mu_{\mathrm{base},i}\right)^{2}}{\sigma_{\mathrm{base},i}^{2} - \sigma_{\mathrm{proj},i}^{2}}
    = \frac{1}{2} \sum_{i=1}^{n} \mathrm{quad\_over\_lin}(\mu_{\mathrm{proj},i} - \mu_{\mathrm{base},i},\sigma_{\mathrm{base},i}^{2} - \sigma_{\mathrm{proj},i}^{2}).
\end{equation}

Since we will be running the same minimization problem many times but with different base and task policy means and standard deviations, we can achieve a significant speed-up in performance by making the problem adhere to Disciplined Parameterized Programming (DPP) rules, and treating the base and task policy means and standard deviations as parameters. It is straightforward to make the problem DPP with little further work. Note also that in the case of optimization failure, we can always fall back on $\policybase$.

The implementation of the convex optimization problem was tested in isolation using randomly-generated examples for  $\policybase$ and $\policytask$. The $\policybase$ means were sampled using a zero-mean Gaussian distribution, and the $\policybase$ standard deviations were sampled using a Rayleigh distribution. The $\policytask$ means were sampled using a much narrower Gaussian distribution centered on the $\policybase$ means, and the $\policytask$ standard deviations were sampled by subtracting sampled values of a much narrower (and truncated) Rayleigh distribution from the $\policybase$ standard deviations, since in practice, the trained $\policytask$ would be somewhat close to $\policybase$ but with lower variance.

Figure \ref{fig:proj_example} presents the solution to the optimization problem for one such example, with $\alpha=1.1$. We see that the solution for $\policyproj$ is such that KL divergence from $\policytask$ is minimized while remaining within $\mathit{\Pi}_{\alpha,\policybase}$. Both the DCP and DPP problems typically return the same solution to 4 decimal places.

The relationship between mean solve time and number of action space dimensions was also investigated. Figure \ref{fig:solve_times} presents the results for both the DCP and DPP problems. We note a linear increasing trend in both cases, and that the DPP problem is roughly an order of magnitude faster to solve than the DCP problem. The solve time remains in the order of milliseconds for the DPP problem for even high-dimensional action spaces, so it is feasible to apply the optimization scheme to even high-dimensional, high-frequency control problems.

\begin{figure}[t]
\centering
\subfloat[]{\includegraphics[width=0.49\linewidth]{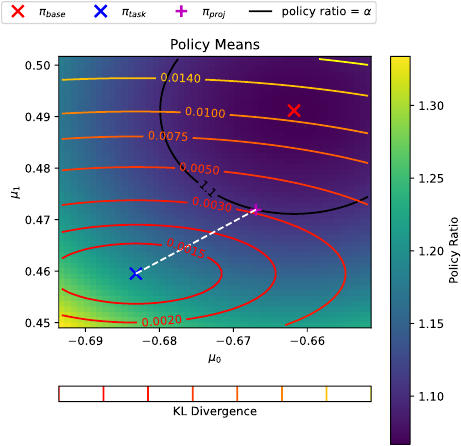}%
\label{fig:proj_example:proj_means}}
\hfil
\subfloat[]{\includegraphics[width=0.49\linewidth]{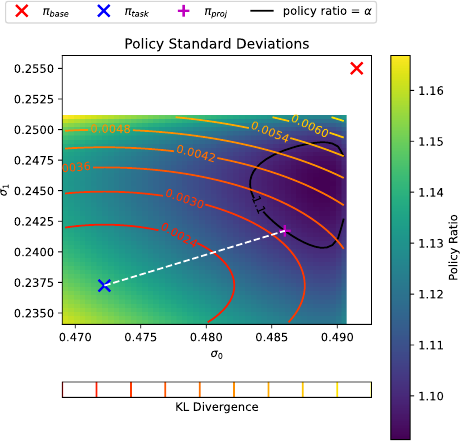}%
\label{fig:proj_example:proj_stdevs}}
\caption{Example result from the convex minimization problem for a 2D action space with randomly generated base policy $\policybase$ and $\policytask$, with $\alpha=1.1$; the black contour depicts the $\alpha = 1.1$ level set. The white dotted line represents the projection of $\policytask$ onto the set $\mathit{\Pi}_{\alpha,\policybase}$, producing $\policyproj$. (\ref{fig:proj_example:proj_means}) Variation in policy ratio and KL divergence with policy means (standard deviations fixed at the values for the joint optimum). (\ref{fig:proj_example:proj_stdevs}) Variation in policy ratio and KL divergence with policy standard deviations (means fixed at the values for the joint optimum). Note how the solution for $\policyproj$ is such that KL divergence from $\policytask$ is minimized while remaining within $\mathit{\Pi}_{\alpha,\policybase}$.}
\label{fig:proj_example}
\end{figure}

\begin{figure}[t]
    \centering
    \includegraphics[width=0.65\linewidth]{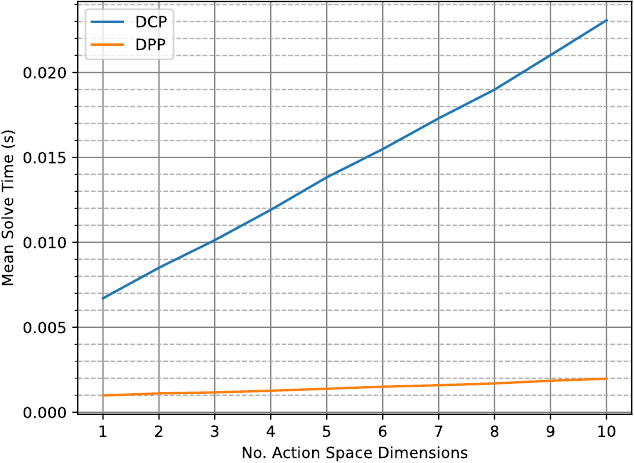}
    \caption{Mean time to solve the optimization program (across 1000 examples) over the number of action space dimensions. We see a linear increasing relationship between the two. Note that the DPP problem is roughly an order of magnitude faster to solve than the DCP problem.}
    \label{fig:solve_times}
\end{figure}

\section{Case Studies}

\subsection{LTL Formula and Robustness Metric for Time-Bounded Reach-Avoid}
\label{appendix:LTL_case_studies}

The LTL formula for the time-bounded reach-avoid property is given as follows:
\begin{equation}
    \varphi = ((\lnot h) \mathbf{U} g) \wedge (\mathbf{F}_{\leq T} g),
\end{equation}
where $g$ and $h$ are atomic propositions for entering the goal and hazard respectively, $T$ is the time limit, and $\lnot$, $\mathbf{U}$, $\wedge$ and $\mathbf{F}_{\leq T}$ are the `not', `until', `and' and `eventually (before $T$)' operators respectively.

A valid robustness metric for this property is given by
\begin{equation}
    \rho^{\varphi}(\traj{}{}) = \sup_{t \in [0..T]} \left( \min \left\{ \inf_{t' \in [0..t]} \Bigl( -d_{\mathcal{S}_H}(\traj{}{}[t']) \Bigr), d_{\mathcal{S}_G}(\traj{}{}[t]) \right\} \right),
\end{equation}
where $\traj{}{}[t]$ is the MDP state $s$ along the trajectory at time $t$, $d_{\Omega}(s)$ is the signed Euclidean distance from MDP state $s$ to some set $\Omega \subset \mathcal{S}$ (where the sign is negative if $s \notin \Omega$), and $\mathcal{S}_G,\mathcal{S}_H \subset \mathcal{S}$ are the goal and hazard sets, respectively. Note that we never have to actually evaluate $\rho^{\varphi}(\traj{}{})$ beyond checking its sign, which is equivalent to checking LTL formula satisfaction, but we provide its explicit representation here to demonstrate its existence (which is required for Theorem \ref{thm:scenario} to hold).

Note that if we choose signal mapping $(x_{G}'(s),x_{H}'(s)) = (d_{\mathcal{S}_G}(s),d_{\mathcal{S}_H}(s))$, this robustness metric is the same as the STL robustness signal for the STL formula for the time-bounded reach-avoid property,
\begin{equation}
    \varphi_{\mathrm{STL}} = \Bigl(\bigl(\lnot (x_{H} \geq 0)\bigr) \mathbf{U} (x_{G} \geq 0)\Bigr) \wedge \bigl(\mathbf{F}_{\leq T} (x_{G} \geq 0)\bigr),
\end{equation}
evaluated on a finite-length, discrete-time trajectory $\traj{}{}$. We see that the LTL atomic propositions $g(s) = \mathcal{H}(d_{\mathcal{S}_G}(s))$ and $h(s) = \mathcal{H}(d_{\mathcal{S}_H}(s))$, in-keeping with our observation in Appendix \ref{appendix:ensuring_existence} that choosing $x'(s)$ such that $L(s) = \mathcal{H}(x'(s))$ allows us to build an equivalent STL formula $\varphi_{\mathrm{STL}}$ for any LTL formula $\varphi$ such that $\traj{}{} \models \varphi \Leftrightarrow \traj{}{} \models \varphi_{\mathrm{STL}}$.

\subsection{MDP Observation and Action Spaces}

The environment for our experimental setup uses continuous observations and actions. There are a total of 44 observations; 32 of these are for the agent's LiDAR sensor, which measures minimum distance from both the goal and hazard sets in each of 16 equally-spaced directions pointing radially out from the agent. This is a reasonable abstraction of a LiDAR typically found on a mobile robot (though a real LiDAR would likely not be able to distinguish between hazard and goal). Nine of the observations are for the agent's accelerometer, velocimeter and gyro (three each), measuring acceleration as well as linear and angular velocity respectively in each spatial dimension; again these are common sensors to have on a real mobile robot (or could at least be estimated using odometry information). Note that one each of the accelerometer and velocimeter observations and two of the gyro observations are irrelevant since the agent is restricted to a plane and so cannot fly up/down, pitch or roll. The final three observations are for the agent's magnetometer that measures magnetic flux in each spatial dimension, which is irrelevant for this problem since there is no magnetic flux.

The two agent actions are forward drive force and turning velocity; this is a reasonable abstraction of a high-level controller for a skid-steering mobile robot.

See \cite{Ji2023} for more detailed information about the Safety Gymnasium agent observations and actions.

\subsection{Training the Base and Task Policy}

SPoRt assumes the availability of a pre-trained $\policybase$, as it focuses on maintaining safety while refining or adapting an existing policy. For our case studies, we obtained $\policybase$ using conventional SAC, which encountered several violations during training. However, these violations, which occur with intermediate policies, do not impact our results, as we only consider the final $\policybase$ and its potential violations. In practice, there are safe training methods for the $\policybase$. For example, in robotics, the $\policybase$ could be trained in a safety-noncritical simulator or using a safety harness, followed by validation in the real environment.

Note that the quality of the trained $\policybase$ will impact its violation probability $\epsilon_{\mathrm{base}}$; higher $\epsilon_{\mathrm{base}}$ means that $\alpha$ must be lower to achieve the same $\epsilon_{\mathrm{task}} = \epsilon_{\mathrm{base}} \alpha^{T} \leq \epsilon_{\mathrm{max}}$, reducing the extent to which the policy can be safely fine-tuned. Moreover, a less exploratory $\policybase$ with lower variance will allow less deviation of $\policyproj$ during fine-tuning for a given $\alpha$. Therefore, $\policybase$ was trained so as to achieve a high probability of property satisfaction (low $\epsilon_{\mathrm{base}}$) while remaining fairly exploratory. This was achieved by training $\policybase$ using SAC with the following sparse reward scheme:
\begin{equation}
    r_{\policybase}(s) =
    \begin{cases}
        \begin{aligned}
            & +1 & & s \in \mathcal{S}_{G} \\
            & -0.25 & & s \in \mathcal{S}_{H} \\
            & ~ 0 & & \mathrm{otherwise}.
        \end{aligned}
    \end{cases}
\end{equation}
The exploratory behavior was encouraged by setting a target (differential) entropy of $0$. To speed up training we used a curriculum \cite{Bengio2009} with 5 stages of increasing difficulty (the final stage being that used for the task environment in the experiments) as well as Prioritized Experience Replay (PER) \cite{Schaul2016}.

In both cases, training was done in the task environment used for experiments (no curriculum). For Case 1, $\policytask$ was trained using SAC but using (dense) $r_{\mathrm{task}}$ without PER; to speed up training, since we assume separately training $\policytask$ without safety considerations is acceptable for Case 1, the environment was not reset when the agent entered the hazard. For Case 2, a different $\policytask$ was trained using Projected PPO for each value of $\alpha$. Training hyperparameters can be found in Table \ref{tab:training_sac} and \ref{tab:training_projected_ppo}. Figure \ref{fig:projected_ppo_episodic_return_alpha_100} presents episodic return during training for Projected PPO with $\alpha = 100$; we see a gradual rise over time as $\policytask$ begins to deviate from $\policybase$, eventually reaching a plateau as $\policytask$ nears the boundary of $\Pi_{\alpha,\policybase}$.

\begin{table}
    \centering
    \begin{tabular}{lrr}
        \toprule
        & $\policybase$ & $\policytask$ (Case 1) \\
        \midrule
        Total interactions & \num{1e5} & \num{1e5} \\
        Policy learning rate & \num{2e-4} & \num{3e-4} \\
        Critic learning rate & \num{7e-4} & \num{1e-3} \\
        Adam epsilon & \num{1e-8} & \textemdash \\
        Discount factor & 0.99 & \textemdash \\
        Buffer size & \num{1e5} & \textemdash \\
        Batch size & 512 & 256 \\
        Replay buffer burn-in before training & \num{5e3} & \textemdash \\
        Autotune entropy coefficient & True & \textemdash \\
        Target entropy per action & 0 & -0.41 \\
        Policy update frequency & 2 & \textemdash \\
        Critic update frequency & 1 & \textemdash \\
        Tau & \num{5e-3} & \textemdash \\
        Use PER & True & False \\
        PER alpha & 0.6 & N/A \\
        PER beta start & 0.4 & N/A \\
        PER beta end & 1.0 & N/A \\
        Use curriculum & True & False \\
        Curriculum levels & 5 & N/A \\
        Curriculum success rate threshold & 0.95 & N/A \\
        Curriculum success window & 100 & N/A \\
        Hidden layers & [256,256] & \textemdash \\
        Activation & ReLU & \textemdash \\
        \bottomrule
    \end{tabular}
    \caption{SAC hyperparameters. Dashes (\textemdash) indicate shared hyperparameter.}
    \label{tab:training_sac}
\end{table}

\begin{table}
    \centering
    \begin{tabular}{lrr}
        \toprule
        & $\policytask$ (Case 2) \\
        \midrule
        Total interactions & \num{3e4} \\
        Learning rate & \num{5e-5} \\
        Adam epsilon & \num{1e-8} \\
        Discount factor & 0.99 \\
        Anneal learning rate & False \\
        Steps per batch & 128 \\
        Minibatches & 4 \\
        Update epochs & 4 \\
        Normalize advantage & True \\
        GAE & True \\
        GAE lambda & 0.95 \\
        Clip coefficient & 0.2 \\
        Clip value loss & False \\
        Entropy coefficient & 0 \\
        Value loss coefficient & 0.5 \\
        Max gradient norm & 0.5 \\
        State-dependent STD & True \\
        Hidden layers & [256,256] \\
        Activation & ReLU \\
        Warm-start interactions & \num{2.5e3} \\
        Warm-start learning rate & \num{3e-4} \\
        \bottomrule
    \end{tabular}
    \caption{Projected PPO hyperparameters.}
    \label{tab:training_projected_ppo}
\end{table}

\begin{figure}[t]
    \centering
    \includegraphics[width=0.99\linewidth]{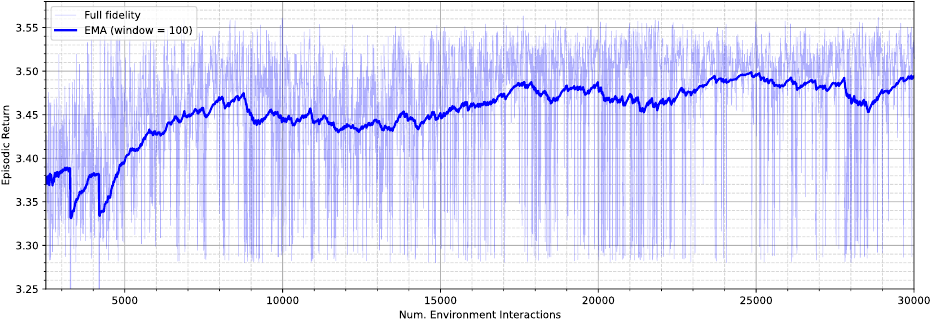}
    \caption{Episodic return during training for Projected PPO with $\alpha = 100$. Both the full fidelity data and the exponential moving average (window size 100) are plotted.}
    \label{fig:projected_ppo_episodic_return_alpha_100}
\end{figure}

\subsection{Selecting Optimal Maximum Episode Length}

Our prior bound $\eps_{\mathrm{task}} = \eps_{\mathrm{base}} \alpha^{T}$ grows exponentially with maximum episode length $T$. For many tasks, the agent is likely to achieve property satisfaction far before reaching the time limit, so it is useful to reduce $T$ to keep $\eps_{\mathrm{task}}$ as low as possible. Note that reducing $T$ can affect the number of scenarios that violate property $\varphi$, which changes the value of $\eps_{\mathrm{base}}$, and so we must now treat $\eps_{\mathrm{base}} = \eps_{\mathrm{base}}(T)$ as a function of $T$. To compute $\eps_{\mathrm{base}}(T)$ with confidence $1 - \beta$ we can use Corollary 1 where $k = k(T)$ is the number of scenarios that violate $\varphi$ when the scenario trajectory length is reduced to $T$.

To maximize improvement in task-specific performance we would like to choose $T$ so as to maximize $\alpha$ at the user-specified maximum acceptable bound on property violation $\eps_{\mathrm{task}}$, since a larger $\alpha$ allows for greater deviation in $\policyproj$ from $\policybase$, enabling greater task-specific performance. Of course, this search can be done automatically, and can be included in the SPoRt pipeline just prior to starting Projected PPO.

Figure \ref{fig:selecting_optimal_episode_length:failure_probs_over_episode_length} presents, for our time-bounded reach-avoid experiment, $\eps_{\mathrm{task}} = \eps_{\mathrm{base}}(T) \alpha^{T}$ from $T = 0$ to $T = 100$ (the original maximum episode length during training) for different values of $\alpha$. We see that for all $\alpha > 1$ there exists $0 < T < 100$ at which $\eps_{\mathrm{task}}$ is minimized.

Figure \ref{fig:selecting_optimal_episode_length:get_optimal_episode_length} presents the maximum permitted $\ln(\alpha)$ over $0 < T \leq 100$ for different values of user-specified $\eps_{\mathrm{task}}$. To select the optimal $T$, we simply choose $T$ at which $\ln(\alpha)$ is maximized for the chosen $\eps_{\mathrm{task}}$. For our experiments we chose $\eps_{\mathrm{task}} = 1$ and so we obtained $T = 21$ and $\ln(\alpha) = 0.22$ ($\alpha = 1.246$).

\begin{figure}[t]
    \centering
    \subfloat[]{\includegraphics[width=0.49\linewidth]{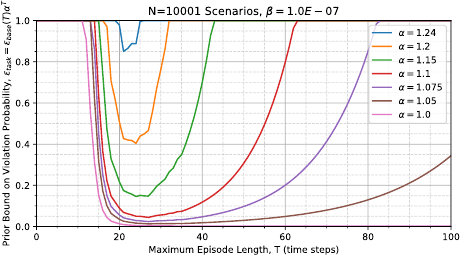}
    \label{fig:selecting_optimal_episode_length:failure_probs_over_episode_length}}
    \hfil
    \subfloat[]{\includegraphics[width=0.49\linewidth]{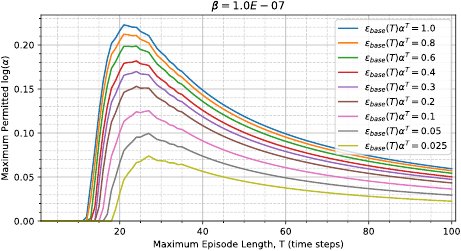}
    \label{fig:selecting_optimal_episode_length:get_optimal_episode_length}}
    \caption{Selecting optimal maximum episode length $T$. (\ref{fig:selecting_optimal_episode_length:failure_probs_over_episode_length}) Violation probabilities over episode length. (\ref{fig:selecting_optimal_episode_length:get_optimal_episode_length}) Maximum permitted $\alpha$ over episode length.}
    \label{fig:selecting_optimal_episode_length}
\end{figure}

\subsection{Additional Figures and Discussion}

Figure \ref{fig:episode:pretrained_episode} presents a more detailed view of the agent behavior over an example episode for Case 1, for $\alpha = 5$ (representing a compromise between safety and performance). Looking at mean turning velocity over the episode, we see that while both $\policybase$ and $\policytask$ drive the agent clockwise around the hazard, $\policytask$ induces sharper turning (much like for Case 2) until around $t = 7$, at which point $\policytask$ drives the agent in roughly the same direction as $\policybase$ but with higher forward drive force. Again, we see that $\policyproj$ always lies within the $\alpha = 5$ level set.

\begin{figure}[h]
    \centering
    \includegraphics[width=0.99\linewidth]{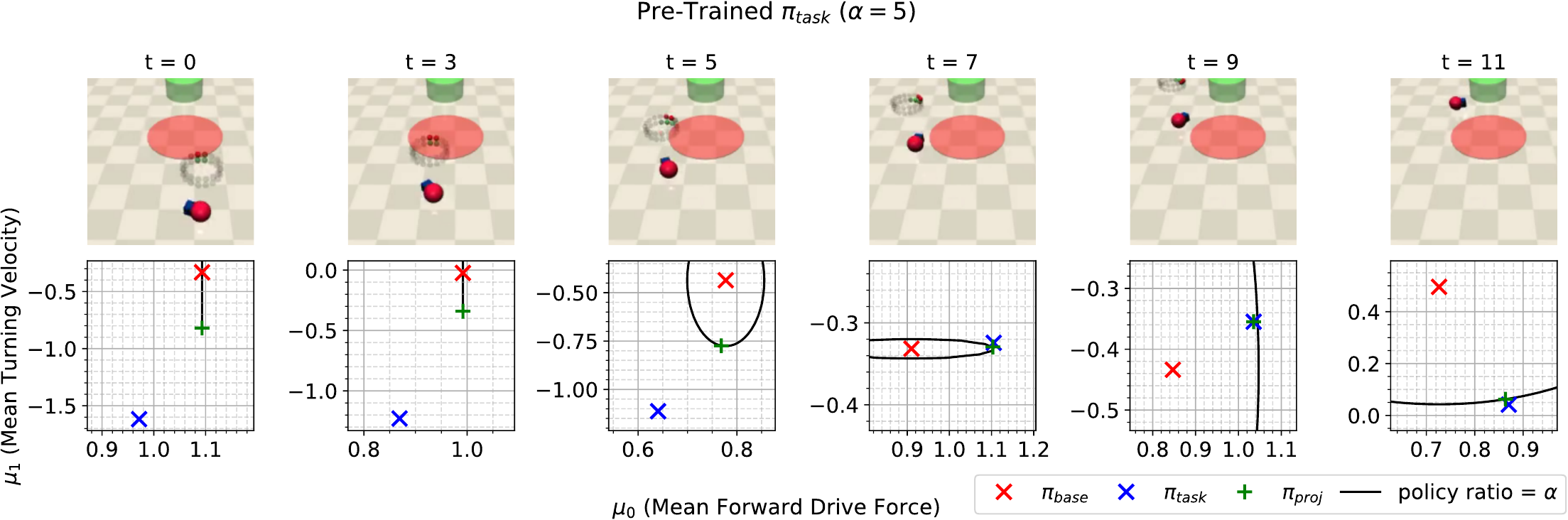}
    \caption{Snapshots across an example episode of the reach-avoid experiment using $\policyproj$ for Case 1 (pre-trained $\policytask$) and $\alpha = 5$; the bottom plots present the action means at the corresponding time step, with the black contour depicting the $\alpha = 5$ level set. Note that positive mean turning velocity represents anticlockwise rotation. The halo above the agent is a visualization of its LiDAR observations for the hazard and goal.}
    \label{fig:episode:pretrained_episode}
\end{figure}

Figures \ref{fig:plots_closeup:failure_probs_closeup} and \ref{fig:plots_closeup:mean_std_time_taken_closeup} plot violation probabilities and mean (and standard deviation) episode length for successful trajectories respectively, and are the same as those found in the main paper but zoomed in to the scale across which $\eps_{\mathrm{task}} \leq 1$, for the reader's convenience.

\begin{figure}[h]
    \centering
    \subfloat[]{\includegraphics[width=0.471\linewidth]{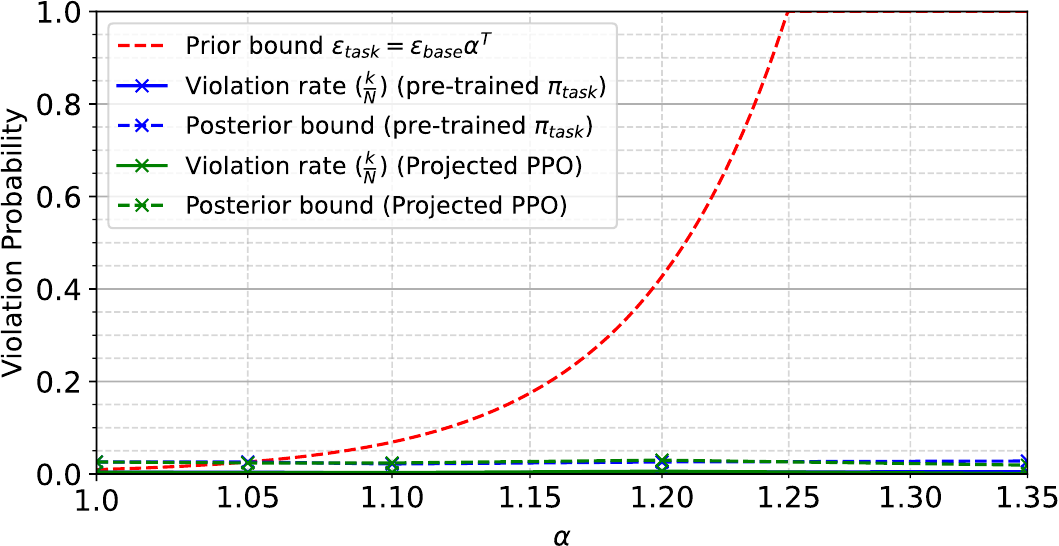}
    \label{fig:plots_closeup:failure_probs_closeup}}
    \hfil
    \subfloat[]{\includegraphics[width=0.514\linewidth]{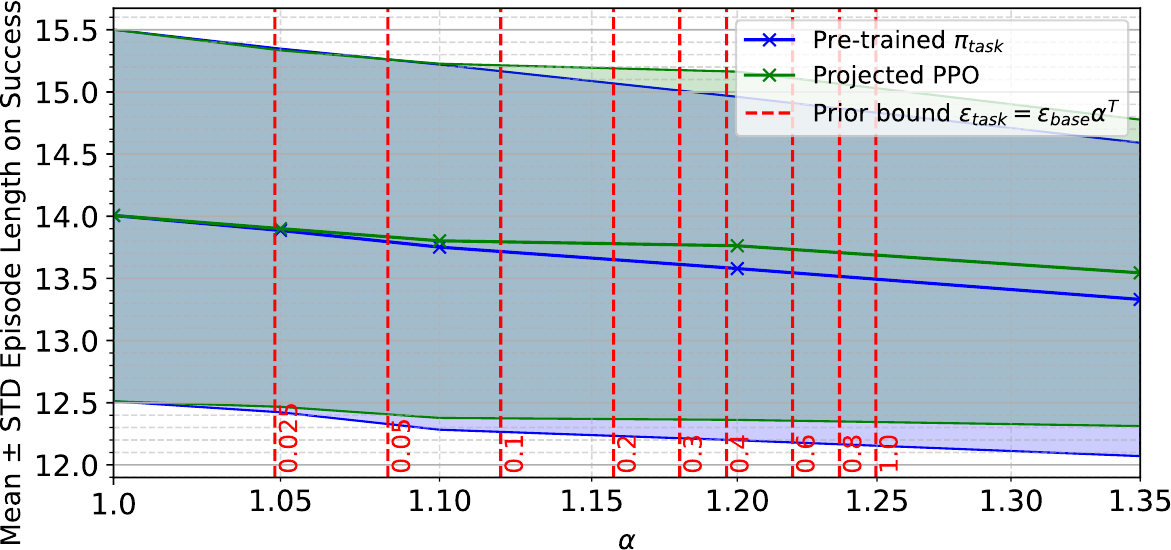}
    \label{fig:plots_closeup:mean_std_time_taken_closeup}}
    \caption{Results from the reach-avoid experiment for both Case 1 (pre-trained $\policytask$) and 2 ($\policytask$ trained using Projected PPO), zoomed in to the scale across which $\eps_{\mathrm{task}} \leq 1$. (\ref{fig:plots_closeup:failure_probs_closeup}) Violation probabilities over different values of $\alpha$. (\ref{fig:plots_closeup:mean_std_time_taken_closeup}) Mean and standard deviation episode length for successful trajectories for different values of $\alpha$. Action seeding for each episode was controlled across different values of $\alpha$ and across the different cases, so all results depend on $\alpha$ and the training of $\policytask$.}
    \label{fig:plots_closeup}
\end{figure}

\newpage

\bibliographystyle{unsrt}
\bibliography{references}